\documentclass{article}
\usepackage[in]{fullpage}
\usepackage{graphicx,psfrag,amsmath,amsthm,amsfonts,amssymb,verbatim,algpseudocode,comment,mathtools}
\usepackage{bbm}
\usepackage[linesnumbered,ruled,vlined]{algorithm2e}
\usepackage[numbers]{natbib} 
\usepackage{graphicx}
\usepackage{xcolor}
\usepackage{enumerate}
\usepackage{textcomp}
\usepackage{caption}
\usepackage{subcaption}
\usepackage{mathrsfs}
\usepackage[citecolor=blue]{hyperref} 
\usepackage{thmtools,thm-restate}
\usepackage[makeroom]{cancel}
\usepackage{dirtytalk}

\newtheorem{theorem}{Theorem}
\newtheorem{lemma}{Lemma}

\newtheorem{remark}{Remark}
\newtheorem{proposition}{Proposition}

\theoremstyle{definition}
\newtheorem{definition}{Definition}
\newtheorem{edefinition}{Existing Definition}

\newtheorem{example}{Example}

\usepackage{epigraph}
\usepackage{etoolbox}

\def\actions{\mathcal{A}}

\def\histories{\mathcal{H}}

\def\E{\mathbb{E}}

\def\H{\mathbb{H}}

\def\I{\mathbb{I}}
\def\Pr{\mathbb{P}}
\def\R{\mathbb{R}}
\def\1{\mathbf{1}}
\def\0{\mathbf{0}}

\newcommand\numberthis{\addtocounter{equation}{1}\tag{\theequation}}

\usepackage[mathscr]{euscript}
\hypersetup{
    colorlinks=true,
    linkcolor=blue,
    filecolor=magenta,      
    urlcolor=purple,
}
\usepackage{xcolor}

\newcount\Comments
\Comments=1 
\newcommand{\kibitz}[2]{\ifnum\Comments=1{\textcolor{#1}{\textsf{\footnotesize #2}}}\fi}
\usepackage{color}
\definecolor{darkred}{rgb}{0.7,0,0}
\definecolor{darkgreen}{rgb}{0.0,0.5,0.0}
\definecolor{darkblue}{rgb}{0.0,0.0,0.5}
\definecolor{teal}{rgb}{0.0,0.5,0.5}
\definecolor{dogwoodrose}{rgb}{0.84, 0.09, 0.41}
\definecolor{electriccrimson}{rgb}{1.0, 0.0, 0.25}
\definecolor{folly}{rgb}{1.0, 0.0, 0.31}
\definecolor{frenchrose}{rgb}{0.96, 0.29, 0.54}
\definecolor{cadetgrey}{rgb}{0.57, 0.64, 0.69}

\allowdisplaybreaks[4]


\usepackage{setspace}
\usepackage{authblk}
\title{A Definition of Non-Stationary Bandits}
\author[1]{Yueyang Liu} 
\author[2 \footnote{Xu Kuang published under a different full name in earlier versions of this manuscript. Please use “Y.
Liu, X. Kuang and B. Van Roy” when citing this paper.}]{Xu Kuang} 
\author[1,3]{Benjamin Van Roy} 
\affil[1]{Department of Management Science and Engineering, Stanford University}
\affil[2]{Stanford Graduate School of Business}
\affil[3]{Department of Electrical Engineering, Stanford University}
\date{}                     
\setcounter{Maxaffil}{0}

\begin{document}

\maketitle

\begin{abstract}
Despite the subject of non-stationary bandit learning having attracted much recent attention, we have yet to identify a formal definition of non-stationarity that can consistently distinguish non-stationary bandits from stationary ones. 
Prior work has characterized non-stationary bandits as bandits for which the reward distribution changes over time. We demonstrate that this definition can ambiguously classify the same bandit as both stationary and non-stationary; 
this ambiguity arises in the existing definition's dependence on the latent sequence of reward distributions. 
Moreover, the definition has given rise to two widely used notions of regret: the dynamic regret and the weak regret. These notions are not indicative of qualitative agent performance in some bandits. 
Additionally, this definition of non-stationary bandits has led to the design of agents that explore excessively. We introduce a formal definition of non-stationary bandits that resolves these issues. 
Our new definition provides a unified approach, applicable seamlessly to both Bayesian and frequentist formulations of bandits. 
Furthermore, our definition ensures consistent classification of two bandits offering agents indistinguishable experiences, categorizing them as either both stationary or both non-stationary. This advancement provides a more robust framework for non-stationary bandit learning.
\end{abstract} 

\section{Introduction}\label{sec:intro}
The subject of non-stationary bandit learning has attracted much recent attention. However, what constitutes a non-stationary bandit as opposed to a stationary one has yet to be formally defined. Non-stationary bandits have typically been defined as bandits for which the \say{reward distribution} changes 
over time. 
For example, in the seminal paper by Whittle \citep{whittle1988restless}, non-stationary bandits are referred to as \emph{restless bandits}, which are described as bandits where \say{the objects continue to change state even when they are not being operated}. 
The Bayesian formulation has motivated research in non-stationarity bandit learning that also adopts a \emph{Bayesian viewpoint}, as we do here. Among these studies, 
\cite{slivkins2008adapting} investigate what they call a dynamic multi-armed bandit, in which \say{the reward functions stochastically and gradually change in time.} \cite{mellor2013thompson} describe non-stationary bandits as ones where \say{the environment changes
over time}, and more specifically, bandits where \say{arms change their expected rewards.}  



We found that this definition cannot unambiguously distinguish non-stationary bandits from stationary ones.
As demonstrated in Example~\ref{example:easily} of Section~\ref{section:issues}, this definition can paradoxically categorize the same bandit as both stationary and non-stationary. This ambiguity stems from the fact that this definition depends on the concept of \say{reward distributions,} which can allow for multiple valid sequences of \say{reward distributions} to be associated with the same bandit. Consequently, the existing definition can lead to confusion and inconsistency when attempting to classify bandits as either stationary or non-stationary.

Furthermore, the existing definition has resulted in several consequential issues. Firstly, it has led to the adoption of regret notions that are less effective in assessing agent performance.  
Secondly, the existing definition has led to the design of agents that explore excessively. In what follows, we delve into each of these issues in greater depth.

Dynamic regret \citep{bogunovic2016time, min2023} and weak regret \citep{anantharam1987asymptotically, liu2011logarithmic, liu2012learning, tekin2012online} are two widely-used performance metrics in non-stationary bandit learning. However, we found that these regret notions are ineffective in assessing agent performance and differentiating optimal agents from the rest in some cases. For example, in an intuitively nearly-stationary bandit, an optimal agent may exhibit a large and linear dynamic regret, misleadingly implying poor performance. Conversely, a sufficiently small positive weak regret does not guarantee near-optimal performance, because an optimal agent can incur a substantial, linear, and negative weak regret.
These issues arise because the weak regret and dynamic regret are inconsistent with the conventional notion used for stationary bandits \citep{lai1985asymptotically, neu2022lifting}. Specifically, in a stationary bandit, the dynamic regret can significantly deviate from the conventional notion of regret; in a non-stationary bandit
which closely resembles a stationary one, 
the weak regret and dynamic regret 
can differ considerably from the conventional notion in the stationary bandit that the non-stationary bandit closely resembles. 
These findings highlight the limitations of using dynamic regret and weak regret for assessing agent performance. It also calls for a formal definition of stationarity and non-stationarity that motivates the alignment of regret notions with the conventional notion.


The existing definition of non-stationary bandits has also led to the development of algorithms that can explore excessively. 
Particularly, many variants of Thompson sampling (TS) \citep{Thompson1933} developed for non-stationary bandits \citep{9194367, ghatak2021kolmogorov, gupta2011thompson, mellor2013thompson, raj2017taming, trovo2020sliding, viappiani2013thompson} have been found by \cite{liu2022nonstationary} to exhibit a tendency to over-explore in certain non-stationary bandits. 
This issue is significantly attributed to the dependence of bandit models on a latent sequence, which leads to inconsistencies between these TS variants and the original TS algorithm. 
Specifically, we can construct a nearly-stationary bandit and a stationary bandit that it closely resembles; 
when these TS variants are applied to either bandit, 
they exhibit notable differences in behavior and performance compared to TS applied to the stationary bandit. 
These findings highlight the inconsistencies between the TS variants and the original TS approach; they emphasize the necessity of formally defining stationarity and non-stationarity, which in turn motivates the development of more effective algorithms that align with their stationary counterparts. 

To address these issues, we introduce a formal definition of non-stationary bandits. This definition does not depend on the \say{reward distributions} or other latent sequences and can, therefore, unambiguously classify bandits. 
In addition, for two bandits that provide agents with indistinguishable experiences, our definition classifies them as both stationary or both non-stationary. 
By establishing this formal definition, we can motivate alternative notions of regret and algorithms that avoid the limitations associated with weak regret, dynamic regret, and many variants of TS. We provide concrete examples illustrating these concepts. 

Importantly, our definition of non-stationary bandits seamlessly applies to both Bayesian and frequentist formulations, encompassing restless bandits \citep{whittle1988restless} and bandits with oblivious adversaries \citep{slivkins2019introduction}, respectively, thereby providing a unified framework. This unified framework enhances our understanding of non-stationary bandits and opens up possibilities for the development of more effective notions of regret and algorithms to address the challenges inherent in non-stationary bandits.




\section{Existing Definitions Ambiguously Classify Bandits}
\label{section:nonstationarity}

We begin this section by introducing basic definitions and notations that we will use throughout the paper. 
We start by presenting a formal definition of a \emph{bandit}, drawing inspiration from the formulation of restless bandits \citep{whittle1988restless}. 

\begin{definition}
A bandit with action set $\actions$ is a stochastic process $\{R_t\}_{t \in \mathbb{N}}$ with state space $\R^{|\actions|}$. 
\label{definition:bandit_1}
\end{definition}
In a bandit $\{R_t\}_{t \in \mathbb{N}}$ with a finite action set $\actions$, the \emph{reward} $R_{t+1,a}$ represents what an agent receives upon executing \emph{action} $a \in \actions$ at timestep $t \in \mathbb{N}_0$. We use $\mathbb{N}_{0}$ to denote the set of all nonnegative integers, and $\mathbb{N}$ to denote the set of all positive integers. 
We sometimes use $R_{1:+\infty}$ to denote the sequence of all rewards $\{R_t\}_{t \in \mathbb{N}}$ and $R_{i:j}$ to denote the sequence of rewards between timesteps $i$ and $j$, i.e,  $\{R_t\}_{t \in [j]/[i]}$, where $[i] = \{1, ..., i\}$.

\label{section:issues}

A formal definition of non-stationarity that distinguishes non-stationary bandits from stationary ones is lacking. 
Instead, previous works \citep{abbasi2022new, auer2018adaptively, bogunovic2016time, burtini2015improving, cao2019nearly, chen2019new,  cheung2019learning, di2020linear, garivier2008upper, 9194367, guha2010approximation, gupta2011thompson, hartland2007change, jia2023smooth, jung2019regret, liu2018change, luo2018efficient, mellor2013thompson, min2023, raj2017taming, slivkins2008adapting, trovo2020sliding, viappiani2013thompson, wei2018abruptly, whittle1988restless, wu2018learning, yu2009piecewise,  xu2023bayesian, zhao2020simple, zhou2021regime} 
 refer to a bandit as non-stationary when its \say{reward distribution} changes over time. 
The reward distributions are involved in this definition because, in many cases, rewards are generated according to a particular model, and the reward distributions represents one such model. In other cases, the reward distributions can serve as convenient abstractions of how rewards are generated. 


To apply the existing definition of non-stationarity for classifying bandits, it is necessary to establish a formal definition of a valid reward distribution. Thus, we present the following formal definition. We use $\mathcal{P}(S)$ to denote the set of all distributions over a set $S$. 
\begin{definition} [\bf{Reward Distribution of a Bandit $\{R_t\}_{t \in \mathbb{N}}$}] 
A stochastic process $\{Q_t\}_{t \in \mathbb{N}}$ with state space 
$\mathcal{P}(\R^{|\actions|})$
is a sequence of reward distributions 
if, conditioned on $\{Q_t\}_{t \in \mathbb{N}}$,  $\{R_t\}_{t \in \mathbb{N}}$ are independent, and each $R_t$ is distributed according to $Q_t$. 
\label{definition:reward_distribution}
\end{definition}
It is important to note that any bandit $\{R_t\}_{t \in \mathbb{N}}$ has a trivial sequence of reward distributions: consider $\{Q_t\}_{t \in \mathbb{N}}$ such that $Q_t = \delta_{R_t}$, a Dirac delta function centered at $R_t$, for all $t \in \mathbb{N}$.


We can now note the existing definition of non-stationary bandits in Existing Definition~\ref{definition:informal_nonstationary}. 
\begin{edefinition} [\bf{Non-Stationary Bandit, Existing}] A bandit $\{R_t\}_{t \in \mathbb{N}}$ with reward distributions $\{Q_t\}_{t \in \mathbb{N}}$ is non-stationary if it is not stationary; it is stationary if $Q_t = Q_{t+1}$ a.s. for all $t \in \mathbb{N}$. 
\label{definition:informal_nonstationary}
\end{edefinition}
We can now apply Existing Definition~\ref{definition:informal_nonstationary} to classify bandits.  
We will use an example to illustrate that this definition can lead to ambiguity, classifying the same bandit as both stationary and non-stationary in some cases. 
Below we provide a formal definition of a class of Bernoulli bandits followed by the specific example we consider. 

\begin{definition} [\bf{Bernoulli Bandit with Independent Actions}]
\label{definition:bernoulli}
For all $a \in \actions$, let $\{\theta_{t, a}\}_{t \in \mathbb{N}}$ be an independent stochastic process with state space $[0, 1]$. 
Conditioned on the collection of $\{\theta_{t, a}\}_{t 
 \in \mathbb{N}}$ for all $a \in \actions$, a reward $R_{t, a}$ is distributed according to $\mathrm{Bernoulli}(\theta_{t,a})$, independent of the rewards associated with other timesteps or actions. 
\end{definition}
\begin{example} [\bf{A Hard-To-Classify Bernoulli Bandit}]
Consider a Bernoulli bandit with two independent actions $1$ and $2$, where 
$\theta_{t,1} = 0.8$ for all $t \in \mathbb{N}$ and $\{\theta_{t,2}\}_{t \in \mathbb{N}}$ is i.i.d. according to a uniform distribution over the set $\{0, 1\}$.  
\label{example:easily}
\end{example}

Existing Definition~\ref{definition:bandit_1} would ambigously classify Example~\ref{example:easily} as both stationary and non-stationary. To elaborate, we can take each reward distribution $Q_{t,a}$ to be a Bernoulli distribution determined by its mean $\theta_{t,a}$.  
Consequently, the reward distribution $Q_t$ changes over time: $Q_t \neq  Q_{t+1}$ with probability $0.5$, because $\theta_t \neq \theta_{t+1}$ with probability $0.5$. 
Thus, 
Existing Definition~\ref{definition:bandit_1} would classify Example~\ref{example:easily} as non-stationary. 
On the other hand, 
we can alternatively define each $Q_{t,2}$ as a Bernoulli distribution with mean $0.5$, because $\{R_{t,2}\}_{t \in \mathbb{N}}$ are i.i.d. $\mathrm{Bernoulli}(0.5)$.  
In this case, the reward distribution $Q_{t}$ remains unchanged over time, and Existing Definition~\ref{definition:informal_nonstationary} would classify Example~\ref{example:easily} as stationary.


This ambiguity in classifying bandits arises in bandits with continuous rewards as well. We provide a formal definition of a class of Gaussian bandits followed by a specific example that we consider. 
\begin{definition} [\bf{Gaussian Bandit with Independent Actions}]
\label{example:gaussian}
For all $a \in \actions$, let 
$\{\theta_{t, a}\}_{t \in \mathbb{N}}$ be an independent stochastic process with state space $\mathbb{R}$. 
Conditioned on the collection of $\{\theta_{t, a}\}_{t 
 \in \mathbb{N}}$ for all $a \in \actions$, the reward $R_{t, a}$ is distributed according to $\mathcal{N}(\theta_{t,a}, 1)$, independent of the rewards associated with other timesteps or actions. 
\end{definition}

\begin{example} [\bf{A Hard-To-Classify Gaussian Bandit}]
Consider a Gaussian bandit with two independent actions $1$ and $2$, where 
$\theta_{t,1} = 0.8$ for all $t \in \mathbb{N}$ and $\{\theta_{t,2}\}_{t \in \mathbb{N}}$ is i.i.d. according to a Gaussian distribution $\mathcal{N}(0.5, 1)$.  
\label{example:easily_g}
\end{example}
Example~\ref{example:easily_g} can be classified as 
non-stationary if we take the reward distribution $Q_{t,2}$ to be $\mathcal{N}(\theta_{t,2}, 1)$, and 
stationary if we take $Q_{t,2}$ to be $\mathcal{N}(0.5, 2)$ instead.


This ambiguity in classifying bandits stems from the dependence of Existing Definition~\ref{definition:informal_nonstationary} on the concept of reward distribution. 
Specifically, at least two different $\{Q\}_{t \in \mathbb{N}}$ are valid reward distributions (Definition~\ref{definition:reward_distribution}) for each of Examples~\ref{example:easily} and~\ref{example:easily_g}. 
A more thoughtful definition of non-stationary bandits would resolve this issue. Indeed, we will propose a definition that does not rely on the concept of reward distribution.

\section{Issues in Regret and Agent Design}
\label{section:regret}
This section discusses issues that arise in defining notions of regret.  

We first introduce useful concepts and notions. We refer to each finite sequence of action-reward pairs as a \emph{history}. We use $\histories$ to denote the set of all \emph{histories}. Below we provide a formal definition of a \emph{policy}. 
\begin{definition} [\bf{Policy}]
A policy $\pi: \histories \rightarrow \mathcal{P}(\actions)$ is a function that maps each history in $\mathcal{H}$ to a probability distribution over the action set $\actions$.  
\end{definition}
An \emph{agent} is capable of selecting and executing a policy, while an \emph{algorithm} refers to the set of procedures or steps employed to execute a particular policy. 
For all policies $\pi$, 
we use $A_t^{\pi}$ to denote the action selected at time $t$ by an agent that executes $\pi$, and 
$H^{\pi}_{t}$ to denote the history generated at timestep $t$ as an agent executes $\pi$.
Specifically, at each timestep $t$, we let $H^{\pi}_{t} = (A^{\pi}_0, R_{1,A^{\pi}_0}, \ldots, A^{\pi}_{t-1}, R_{t,A^{\pi}_{t-1}})$; we let 
$A_t^{\pi}$ be such that $\Pr(A^{\pi}_t \in \cdot |H^{\pi}_t) =  \pi(\cdot|H^{\pi}_t)$ and that $A_t^{\pi}$ is independent of $R_{1:+\infty}$ conditioned on $H_t^{\pi}$.


\subsection{Desired Properties of Regret}
Bandit learning agents seek to maximize cumulative expected reward. However, evaluating an agent's performance solely based on the reward collected is challenging because it is difficult to distinguish between good and bad outcomes. To address this, it is helpful to 
compare an agent's performance against a benchmark $ \mathrm{Benchmark}$. This comparison is quantified using the notion of regret. 
Specifically, the regret of a policy $\pi$ over $T \in \mathbb{N}$ timesteps in a bandit $\{R_t\}_{t \in \mathbb{N}}$ is defined as 
\begin{align*}
\mathrm{Regret}(\{R_t\}_{t \in \mathbb{N}}; T; \pi) = \mathrm{Benchmark}(\{R_t\}_{t \in \mathbb{N}}; T) - \sum_{t = 0}^{T-1} \mathbb{E}\left[R_{t+1, A_t^{\pi}}\right].
\end{align*}
When it is clear from the context that we consider a single bandit
$\{R_t\}_{t \in \mathbb{N}}$, we use 
$\mathrm{Benchmark}(T)$ and 
$\mathrm{Regret}(T; \pi)$ as shorthands.  

The construction of $\mathrm{Benchmark}$ plays a crucial role in evaluating agent performance. Ideally, we want to construct $\mathrm{Benchmark}$ 
such that $\mathrm{Benchmark}(\{R_t\}_{t \in \mathbb{N}}; T)$ corresponds to the cumulative expected reward collected by an optimal agent in $\{R_t\}_{t \in \mathbb{N}}$ over $T$ timesteps. 
In the context of stationary bandits, the conventional benchmark, known as $\mathrm{Benchmark}^{\mathrm{C}}$, has gained widespread adoption and is fully satisfactory, 
because $\mathrm{Benchmark}^{\mathrm{C}}$ 
is only slightly better than, and asymptotically equivalent to, the performance of an optimal agent. Thus, $\mathrm{Benchmark}^{\mathrm{C}}$ is already capable to measure the sub-optimality of an agent: a small regret indicates that an agent performs well, while a large regret suggests the opposite. 

However, the existence of non-stationarity poses a significant challenge when using the performance of an optimal agent as our benchmark. This approach, although useful, can complicate the analysis of regret. To address this issue, we aim to construct an alternative benchmark such that the regret is both easy to analyze and capable of effectively evaluating agent performance. 



To effectively evaluate agent performance in non-stationary bandit learning, a notion of regret is expected to be consistent with the conventional regret used in stationary bandit learning. 
This consistency ensures that the two notions are equivalent when applied to a stationary bandit $\{R_t\}_{t \in \mathbb{N}}$. Furthermore, in an intuitively nearly-stationary bandit $\{R^{\prime}_t\}_{t \in \mathbb{N}}$ that closely resembles $\{R_t\}_{t \in \mathbb{N}}$, the regret notion should not deviate significantly from the conventional notion in $\{R_t\}_{t \in \mathbb{N}}$. By maintaining this consistency, we can effectively evaluate agent performance in $\{R^{\prime}_t\}_{t \in \mathbb{N}}$. 

We introduce the following concept of convergence, and formalize this desired property. 

\begin{definition} [\bf{Convergence of Bandits}]
\label{defi:conv}
A sequence $\{\{R_t^{(k)}\}_{t \in \mathbb{N}}\}_{k \in \mathbb{N}}$  of bandits \textit{converges in distribution} to a bandit $\{R_t\}_{t \in \mathbb{N}}$ if and only if for all $m \in \mathbb{N}$ and all sequences $(t_1, \cdots, t_m)\in \mathbb{N}^m$, we have 
\[
\lim_{k \rightarrow +\infty} \mathbb{P}\left(\{R_t^{(k)}\}_{t \in \{t_1, ... , t_m\}}\in \cdot\right) = \mathbb{P}(\{R_{t}\}_{t \in \{t_1, ... , t_m\}}\in \cdot). 
\]
\begin{remark}
Definition \ref{defi:conv} directly follows from the Kolmogorov extension theorem, which ensures that a sufficiently \say{consistent} set of finite-dimensional distributions can define a stochastic process \citep{kimme1957convergence}.
\end{remark}
\label{definition:convergence}
\end{definition}

Next, we formalize the desired property using convergence of bandits. 

\begin{definition}[\bf{Consistency with the Conventional Notion of Regret}]
We say that a notion of regret $\mathrm{Regret}$, defined with respect to $\mathrm{Benchmark}$, is \textit{consistent} with the conventional notion, if 
for all stationary bandits 
$\{R_t\}_{t \in \mathbb{N}}$, and 
for all sequences of bandits $\{\{R_t^{(k)}\}_{t \in \mathbb{N}}\}_{k \in \mathbb{N}}$ that converges to $\{R_t\}_{t \in \mathbb{N}}$, and for all horizon $T \in \mathbb{N}$, 
$\mathrm{Benchmark}$ satisfies 
\begin{align*}
\lim_{k \rightarrow +\infty} \mathrm{Benchmark}\left(\left\{R^{(k)}_t\right\}_{t \in \mathbb{N}}; T\right) = \mathrm{Benchmark}^{\mathrm{C}}(\{R_t\}_{t \in \mathbb{N}}; T). 
\end{align*}
\label{definition:generalize_regret}
\end{definition}

In the upcoming discussion, we will introduce two commonly used notions of regret: dynamic regret and weak regret. 
Through illustrative examples, we will demonstrate that these notions of regret do not satisfy the aforementioned desired property---in other words, are not consistent with the conventional notion of regret defined for stationary bandits. 

\subsection{Issues of Dynamic Regret}
Dynamic regret $\mathrm{Regret}^{\mathrm{D}}$
is defined relative to the performance of an oracle who knows some latent variable, which is usually the reward distribution $Q_t$, at each timestep $t$ and acts optimally. 
We present a formal definition of dynamic regret below, where we take the latent variable at timestep $t \in \mathbb{N}$ to be $Q_t$. 
\begin{edefinition} [\bf{Dynamic Regret}]
\label{definition:dynamic_regret}
For all policies $\pi$, bandits $\{R_t\}_{t \in \mathbb{N}}$, and $T \in \mathbb{N}$, 
the dynamic regret is defined as 
\begin{align*}
\mathrm{Regret}^{\mathrm{D}}(\{R_t\}_{t \in \mathbb{N}}; T;\pi) = 
\mathrm{Benchmark}^{\mathrm{D}}(\{R_t\}_{t \in \mathbb{N}}; T) - 
\sum_{t = 0}^{T-1}\E\left[R_{t+1, A_t^{\pi}}\right], 
\end{align*}
where $\mathrm{Benchmark}^{\mathrm{D}}(\{R_t\}_{t \in \mathbb{N}}; T) = \sum_{t = 0}^{T-1} \mathbb{E}[\max_{a \in \actions}\mathbb{E}[R_{t+1,a}|Q_{t+1}]]$, and $\{Q_t\}_{t \in \mathbb{N}}$ is a sequence of reward distributions. 
\end{edefinition}
A limitation of dynamic regret is that its performance benchmark can be large. 
This is because, given past information, it may be challenging, if not impossible, to accurately learn the reward distribution in the previous timestep, let alone the current reward distribution. As a consequence, even a competent agent may exhibit a substantial dynamic regret, rendering this metric ineffective in distinguishing capable agents in certain scenarios. This limitation is particularly pronounced in nearly-stationary bandits, where the dynamic regret can significantly diverge from the conventional notion of regret defined for stationary bandits. Using such cases, we can show that the dynamic regret is inconsistent with the conventional notion of regret defined for stationary bandits. 

To illustrate this, below we introduce an example. \cite{liu2022nonstationary, mellor2013thompson} introduce bandits of similar structures. 

\begin{example} [\bf{A Modulated Bernoulli Bandit with Independent Actions}]
\label{ex:modulated_bernoulli}
\noindent 
Consider a Bernoulli bandit 
$\{R_t(q)\}_{t \in \mathbb{N}}$
with two independent actions (Definition~\ref{definition:bernoulli}), where 
$\theta_{t,1}(q) = 0.8$ for all $t \in \mathbb{N}$ and 
$\theta_{1,2}(q)$ is distributed uniformly over the set $\{0, 1\}$. 
The sequence $\{\theta_{t,2}(q)\}_{t \in \mathbb{N}}$
transitions according to 
\begin{align*}
    \theta_{t+1, 2}(q) = 
    \begin{cases}
    \sim \mathrm{unif}(\{0, 1\}),  &\ \text{with probability $q$}, \\
    \theta_{t,2}(q), &\ \text{otherwise}, 
    \end{cases}
\end{align*}
where $q \in [0, 1]$ is deterministic. 
At each timestep $t \in \mathbb{N}$, 
$\theta_{t,2}(q)$
can be thought of as \say{redrawn} uniformly from $\{0, 1\}$ 
independently with probability $q$. 
\label{example:bernoulli_example}
\end{example}


This example illustrates that the dynamic regret is not consistent with the conventional notion of regret, i.e., the dynamic regret does not satisfy Definition~\ref{definition:generalize_regret}. First, for all $q \in [0, 1]$, let $Q_t(q)$ to be the distribution of two independent Bernoulli random variables with mean $\theta_{t, 1}(q) = 0.8$ and $\theta_{t,2}(q)$, respectively. Therefore, $\{Q_t(q)\}_{t \in \mathbb{N}}$ is a valid sequence of reward distributions for $\{R_t(q)\}_{t \in \mathbb{N}}$. 
Consequently, for all $q \in [0, 1]$, the performance benchmark of the dynamic regret 
\begin{align*}
\mathrm{Benchmark}^{\mathrm{D}}\left(\{R_t(q)\}_{t \in \mathbb{N}}; T\right) = \sum_{t = 0}^{T-1}\mathbb{E}\left[\max_{a \in \actions} \theta_{t+1,a}(q)\right] = 
\sum_{t = 0}^{T-1}\mathbb{E}\left[\max\{0.8, \theta_{t+1, 2}(q)\}\right]
= 
0.9T.
\end{align*}
However, if $q = 1$, the conventional benchmark $\mathrm{Benchmark}^{\mathrm{C}}(\{R_t(q)\}_{t \in \mathbb{N}}; 
T) = \mathrm{Benchmark}^{\mathrm{C}}(\{R_t(1)\}_{t \in \mathbb{N}}; 
T) = 0.8T$. Thus, Definition~\ref{defi:conv}
 does not hold and the dynamic regret is inconsistent with the conventional notion of regret defined for stationary bandits. As a result, an optimal agent incurs a large and linear dynamic regret in bandits where $q$ is close to $1$ (nearly-stationary), and the dynamic regret of an optimal agent converges to $0.1 T$ as $q \rightarrow 1$.   
 
It is worth highlighting that when $q = 1$, Example~\ref{example:bernoulli_example} is reduced to Example~\ref{example:easily}, for which the existing definition fails to unambiguously classify. So a definition of non-stationarity that can unambiguously classify bandits can motivate notions of regret that are consistent with the conventional notion.

While Example~\ref{example:bernoulli_example} characterizes bandits with \emph{abrupt changes}, let us consider another example,  where the changes are \emph{smooth}. Bandits of similar structures have been studied by 
\cite{burtini2015improving, 
gupta2011thompson, gaussianar1, kuhn2015wireless, liu2022nonstationary, slivkins2008adapting}.

\begin{example} [\bf{An AR(1) Bandit with Independent Actions}]
\label{ex:modulated_bernoulli}
\noindent 
Consider a Gaussian bandit
$\{R_t(\gamma)\}_{t \in \mathbb{N}}$
with two independent actions (Definition~\ref{example:gaussian}), where 
$\theta_{t,1} = 0.8$ for all $t \in \mathbb{N}$ and 
$\theta_{1,2}$ is distributed according to $\mathcal{N}(0.5, 1)$. 
The sequence $\{\theta_{t,2}\}_{t \in \mathbb{N}}$
transitions according to 
\begin{align*}
    \theta_{t+1, 2} = \gamma \theta_{t,2} + (1 - \gamma) W_{t+1}, 
\end{align*}
where $\gamma \in [0, 1]$ is deterministic, and the sequence $\{W_t\}_{t \in \mathbb{N}}$ is i.i.d.  $\mathcal{N}\left(0.5, \frac{1 - \gamma^2}{(1 - \gamma)^2}\right)$. 
\label{example:gaussian_example}
\end{example}

We can show that for all $\gamma \in [0, 1]$, the benchmark $\mathrm{Benchmark}^{\mathrm{D}}(
\{R_t(\gamma)\}_{t \in \mathbb{N}}; 
T) \approx 1.07T$. 
When $\gamma = 0$, Example~\ref{example:gaussian_example} is reduced to Example~\ref{example:easily_g}, and the conventional benchmark $\mathrm{Benchmark}^{\mathrm{C}}(
\{R_t(0)\}_{t \in \mathbb{N}}; 
T) = 0.8T$. 
Therefore, the dynamic regret is inconsistent with the conventional regret, and an optimal agent incurs a large and linear dynamic regret in a nearly-stationary bandit. 

\subsection{Issues of Weak Regret}
Weak regret assesses agent performance relative to the performance of the best single-action policy with respect to the initial reward distribution, that is, the best policy among those that knows the initial reward distribution and selects the same single action at each timestep.  
We present a formal definition of weak regret below. 

\begin{edefinition} [\bf{Weak Regret}]
For all policies $\pi$, bandits $\{R_t\}_{t \in \mathbb{N}}$, and $T \in \mathbb{N}$, 
the weak regret is defined as 
\begin{align*}
\mathrm{Regret}^{\mathrm{W}}(\{R_t\}_{t \in \mathbb{N}}; T; \pi)
=
\mathrm{Benchmark}^{\mathrm{W}}(\{R_t\}_{t \in \mathbb{N}}; T) 
-
\mathbb{E}\left[R_{t+1, A_t^{\pi}}\right], 
\end{align*}
where $\mathrm{Benchmark}^{\mathrm{W}}(\{R_t\}_{t \in \mathbb{N}}; T) = T \mathbb{E}\left[\max_{a \in \actions} \left(\lim_{H \rightarrow +\infty} \frac{1}{H}\sum_{t = 0}^{H-1} \E[R_{t+1,a} | Q_1]\right)\right]$. 
\label{definition:weak_regret}
\end{edefinition}
A limitation of the weak regret is that its performance benchmark $\mathrm{Benchmark}^{\mathrm{W}}$ is conservative, because an optimal policy does not necessarily select the same single action at each timestep. 
This limitation has been acknowledged in prior work, but to our knowledge, no studies have characterized the degree of conservatism of weak regret, or in which types of bandits this issue arises and why. 

We demonstrate that, strikingly, the weak regret is inconsistent with the conventional notion of regret, i.e., does not satisfy Definition~\ref{definition:generalize_regret},  resulting in its conservation and ineffectiveness even in nearly-stationary bandits. 
To illustrate this, we reconsider Example~\ref{example:bernoulli_example}. When $q \in (0, 1]$, the performance benchmark of weak regret 
\begin{align*}
\mathrm{Benchmark}^{\mathrm{W}}\left(\{R_t(q)\}_{t \in \mathbb{N}}; T\right) = T \mathbb{E}[\max\{0.8, 0.5\}] = 0.8 T.
\end{align*}
When $q = 0$, however, the conventional benchmark $\mathrm{Benchmark}^{\mathrm{C}}\left(\{R_t(q)\}_{t \in \mathbb{N}}; T\right) = 
\mathrm{Benchmark}^{\mathrm{C}}\left(\{R_t(0)\}_{t \in \mathbb{N}}; T\right) = 
0.9 T$. 
Therefore, the weak regret is inconsistent with the conventional regret. Additionally, an optimal agent incurs a significant and negative linear weak regret in bandits where $q$ is close to $0$ (nearly-stationary), and the regret converges to $-0.1T$ as $q \rightarrow 0$. 

We also reconsider Example~\ref{example:gaussian_example}, to illustrate that the weak regret is inconsistent with the conventional notion of regret in  bandits with \emph{smooth} changes. When $\gamma \in [0, 1)$, benchmark $\mathrm{Benchmark}^{\mathrm{W}}(
\{R_t(\gamma)\}_{t \in \mathbb{N}}; 
T) = T\mathbb{E}[\max\{0.8, 0.5\}] = 0.8T$. However, when $\gamma = 1$, the conventional benchmark is $\mathrm{Benchmark}^{\mathrm{C}}(
\{R_t(1)\}_{t \in \mathbb{N}}; 
T) \approx 1.07 T$. Therefore, the weak regret is inconsistent with the conventional notion of regret, and an optimal agent can incur a substantial, negative, and linear weak regret in a nearly-stationary bandit.



\subsection{Generalization}
\label{section:generalization}
As discussed earlier, Examples~\ref{example:bernoulli_example} and~\ref{example:gaussian_example} illustrate how dynamic regret and weak regret are inconsistent with the conventional notion of regret defined for stationary bandits. We now demonstrate that these inconsistencies are not limited to those specific examples but exist among a broader set of scenarios. To do so, we introduce a class of Bernoulli bandits and a class of Gaussian bandits, which encompass Examples~\ref{example:bernoulli_example} and~\ref{example:gaussian_example}, respectively. Through our analysis, we reveal that such inconsistencies persist across many sequences of bandits within these classes.

We first introduce the two class of bandits. Modulated Bernoulli bandits represents a class of Bernoulli bandits modulated by a Markov chain. 
\begin{example} [\bf{Modulated Bernoulli Bandit with Independent Actions}]
\label{ex:modulated_bernoulli}
\noindent 
Consider a Bernoulli bandit with independent actions (Definition~\ref{definition:bernoulli}), where for each $a \in \actions$, the sequence $\{\theta_{t,a}\}_{t \in \mathbb{N}}$
transitions according to 
\begin{align*}
    \theta_{t+1, a} = 
    \begin{cases}
    \sim \Pr(\theta_{1,a} \in \cdot),  &\ \text{with probability $q_a$}, \\
    \theta_{t,a}, &\ \text{otherwise}, 
    \end{cases}
\end{align*}
where $q_a \in [0, 1]$ is deterministic. 
At each timestep $t \in \mathbb{N}$, 
$\theta_{t,a}$
can be thought of as ``redrawn" from its initial distribution 
$\Pr(\theta_{1,a} \in \cdot)$ 
independently with probability $q_a$. 
\label{example:modulated}
\end{example}
This formulation encompasses stationary Bernoulli bandits and Example~\ref{example:bernoulli_example}. When $q_a = 0$ for all $a \in \actions$, we recover a stationary Bernoulli bandit; when $\actions = \{1, 2\}$, $\theta_{1, 1} = 0.8$, $q_1 = 0$, and $\theta_{1,2} \sim \mathrm{unif}(\{0, 1\})$, we recover Example~\ref{example:bernoulli_example}.  
Furthermore, this formulation bears a close resemblance to the non-stationary Bernoulli bandits introduced by \cite{liu2022nonstationary, mellor2013thompson}, as well as bandits with \say{abrupt changes} or \say{piece-wise stationary} bandits. 

We next introduce the class of Gaussian bandits. We refer to it as the AR(1) bandits because each bandit is modulated by AR(1) processes. 

\begin{example} [\bf{An AR(1) Bandit with Independent Actions}]
\label{ex:modulated_bernoulli}
\noindent 
Consider a Gaussian bandit with independent actions (Definition~\ref{example:gaussian}), where for each $a \in \actions$, the sequence $\{\theta_{t,a}\}_{t \in \mathbb{N}}$
transitions according to 
\begin{align*}
    \theta_{t+1, a} = \gamma_a \theta_{t,a} + (1 - \gamma_a) W_{t+1,a}, 
\end{align*}
where each $\gamma_a \in [0, 1]$ is deterministic, and the sequence $\{W_{t,a}\}_{t \in \mathbb{N}}$ is i.i.d. Gaussian random variables with mean $\mathbb{E}[\theta_{1,a}]$ and variance $\frac{1 - \gamma^2}{(1 - \gamma)^2} \mathrm{Var}(\theta_{1,a})$.  
\label{example:AR1}
\end{example}

This formulation encompasses stationary Gaussian bandits and Example~\ref{example:gaussian_example}. When $\gamma_a = 1$ for all $a \in \actions$, we obtain a stationary Gaussian bandit; when $\actions = \{1, 2\}$, $\theta_{1, 1} = 0.8$, $\gamma_1 = 1$, and $\theta_{1,2} \sim \mathcal{N}(0.5, 1)$, we recover Example~\ref{example:gaussian_example}.  
Furthermore, this formulation bears a close resemblance to the non-stationary Gaussian bandits introduced by \cite{burtini2015improving, 
gupta2011thompson, gaussianar1, kuhn2015wireless, liu2022nonstationary, slivkins2008adapting}, and represents bandits with \say{smooth/continuous changes.} 

We now present Theorem~\ref{theorem:inconsistency}, which establishes that both the dynamic regret and the weak regret are inconsistent with the conventional notion of regret defined for stationary bandits,  
using modulated Bernoulli bandits and AR(1) bandits. 

\begin{restatable}{theorem}{inconsistency}
{\bf{(Inconsistencies of Dynamic Regret and Weak Regret with the Conventional Notion of Regret).}}
\label{theorem:inconsistency}
Suppose $\mathcal{B}$ is a set of modulated Bernoulli bandits (resp. AR(1) bandits) with the same $\Pr(\theta_{1,a} \in \cdot)$ across bandits for all $a \in \actions$.
Then, 
for all sequences of bandits in $\mathcal{B}$, where the $k$-th bandit is denoted by $\{R_t^{(k)}\}_{t \in \mathbb{N}}$ and determined by $q_a^{(k)}$ (resp. $\gamma_a^{(k)}$), we have
\begin{enumerate}
\item if $\lim_{k \rightarrow +\infty} q_a^{(k)} = 1$ (resp. $\lim_{k \rightarrow +\infty} \gamma_a^{(k)} = 0$) for all $a \in \actions$, then there exists a stationary bandit $\{R_t\}_{t \in \mathbb{N}}$ such that the sequence of bandits converges to $\{R_t\}_{t \in \mathbb{N}}$, and for all $T \in \mathbb{N}$, the performance benchmark of dynamic regret satisfies
\begin{align*}
\lim_{k \rightarrow +\infty}  \mathrm{Benchmark}^{\mathrm{D}}\left(\left\{R^{(k)}_t\right\}_{t \in \mathbb{N}}; T\right) - \mathrm{Benchmark}^{\mathrm{C}}(\{R_t\}_{t \in \mathbb{N}}; T) 
= T \left(\mathbb{E}\left[\max_{a \in \actions} \theta_{1,a}\right] - \max_{a \in \actions} \mathbb{E}[\theta_{1,a}]\right);
\end{align*}
\item if $\lim_{k \rightarrow +\infty} q_a^{(k)} = 0$ (resp. $\lim_{k \rightarrow +\infty} \gamma_a^{(k)} = 1$)  
for all $a \in \actions$, then there exists a stationary bandit $\{R_t\}_{t \in \mathbb{N}}$ such that the sequence of bandits converges to $\{R_t\}_{t \in \mathbb{N}}$, and for all $T \in \mathbb{N}$, the performance benchmark of weak regret satisifies
\begin{align*}
\lim_{k \rightarrow +\infty}  \mathrm{Benchmark}^{\mathrm{W}}\left(\left\{R^{(k)}_t\right\}_{t \in \mathbb{N}}; T\right) - \mathrm{Benchmark}^{\mathrm{C}}(\{R_t\}_{t \in \mathbb{N}}; T) 
= - T \left(\mathbb{E}\left[\max_{a \in \actions} \theta_{1,a}\right] - \max_{a \in \actions} \mathbb{E}[\theta_{1,a}]\right).
\end{align*}
\end{enumerate}
\end{restatable}

It is important to highlight that $\mathbb{E}[\max_{a \in \actions}\theta_{1,a}] - \max_{a \in \actions}\mathbb{E}[\theta_{1,a}] \geq 0$ due to Jensen's inequality, and the inequality is typically strict. For instance, the inequality is strict when $\Pr(\max_{a \in \actions-a^{\star}} \theta_{1,a} > \theta_{1,a^{\star}}) > 0$, where $a^{\star} \in \arg \max_{a \in \actions} \mathbb{E}[\theta_{1,a}]$, and $\mathcal{A} - a^{\star}$ denotes the set of all actions excluding $a^{\star}$.  
This condition is satisfied, for example, when all $\theta_{1,a}$'s have the same non-atomic support. 

Therefore, Theorem~\ref{theorem:inconsistency} reveals that inconsistencies in notions of regret can be prevalent in various Bernoulli and Gaussian bandits; by applying the theorem to Examples~\ref{example:bernoulli_example} and~\ref{example:gaussian_example}, we can specifically identify the inconsistencies in those examples. It is worth noting that Bernoulli and Gaussian bandits are among the most widely studied bandits in the literature, 
and that modulated Bernoulli bandits and AR(1) bandits resemble the bandits introduced by \cite{burtini2015improving, gupta2011thompson, gaussianar1, kuhn2015wireless, liu2022nonstationary, mellor2013thompson, slivkins2008adapting}, as well as others such as bandits with \say{abrupt changes} or \say{piece-wise stationary} bandits, or bandits with \say{smooth changes.} These findings suggest that the observed inconsistencies are of a general nature.

\section{Issues in Agent Design}
This section discusses issues that arise in designing agents for non-stationary bandits. 
\subsection{Desired Properties of Agent Design}
A number of non-stationary bandit learning algorithms \citep{auer2018adaptively, 
besbes2019optimal, besson2019generalized, chen2019new, cheung2019learning, garivier2008upper, 9194367, ghatak2021kolmogorov, gupta2011thompson, hartland2007change, kocsis2006discounted, luo2018efficient, raj2017taming, russac2019weighted, trovo2020sliding, viappiani2013thompson, wei2021non, wei2018abruptly, yu2009piecewise, zhao2020simple} have been developed, drawing from traditional stationary bandit learning algorithms such as TS, Upper-Confidence Bounds (UCB) \citep{auer2002finite, lai1985asymptotically}, or Exponential-weight algorithms (EXP3) \citep{auer2002nonstochastic, cesa2006prediction}. These non-stationary variants improve upon the traditional algorithms by incorporating heuristics such as maintaining a sliding-window, discounting past rewards, periodic restart, or detecting change-points.

In designing an algorithm for non-stationary bandits, it is desirable for the algorithm to perform well in stationary bandits as well. 
Thus, the algorithm does not sacrifice its performance in stationary bandits to gain advantages in non-stationary ones.
Specifically, for a non-stationary variant $\pi^{\prime}$, it is desirable for it to be consistent with its stationary counterpart $\pi$. 
This consistency ensures that the two policies are equivalent when applied to a stationary bandit $\{R_t\}_{t \in \mathbb{N}}$. 
Furthermore, in an intuitively nearly-stationary bandit $\{R^{\prime}_t\}_{t \in \mathbb{N}}$ that closely resembles $\{R_t\}_{t \in \mathbb{N}}$, $\pi^{\prime}$ should not deviate substantially from $\pi$. By maintaining this consistency, $\pi^{\prime}$ is effective in $\{R_t\}_{t \in \mathbb{N}}$ and 
$\{R^{\prime}_t\}_{t \in \mathbb{N}}$.


Below we formalize the desired property using the convergence of bandits (Definition~\ref{definition:convergence}). According to Definition~\ref{definition:consistency_algorithm}, a desired property of a non-stationary bandit learning algorithm $\pi^{\prime}$ that is designed based on a stationary bandit learning algorithm $\pi$ is for $\pi^{\prime}$ to be consistent with $\pi$. 

\begin{definition} [\bf{Consistency of Policies}]
We say that policy $\pi^{\prime}$ is \emph{consistent} with policy $\pi$ if 
for all stationary bandits 
$\{R_t\}_{t \in \mathbb{N}}$, all sequences of bandits $\{\{R_t^{(k)}\}_{t \in \mathbb{N}}\}_{k \in \mathbb{N}}$ that converges to $\{R_t\}_{t \in \mathbb{N}}$, and all horizon $T \in \mathbb{N}$, we have 
\begin{align*}
\lim_{k \rightarrow +\infty} \sum_{t = 0}^{T-1}\mathbb{E}\left[R^{(k)}_{t+1, A_t^{\pi^{\prime}}}\right] = \sum_{t = 0}^{T-1}\mathbb{E}[R_{t+1, A_t^{\pi}}].
\end{align*}
\label{definition:consistency_algorithm}
\end{definition}


\subsection{Issues in Agent Design}
Taking some non-stationary variants of TS
\citep{9194367, ghatak2021kolmogorov, mellor2013thompson, raj2017taming, trovo2020sliding, viappiani2013thompson}
as examples, we discuss how various non-stationary bandit learning algorithms are inconsistent with their stationary counterparts. 
To illustrate this, again, we consider Example~\ref{example:bernoulli_example}.
At each timestep $t \in \mathbb{N}$, a non-stationary variant of TS estimates the posterior $\mathbb{P}( \theta_{t,a}(q) \in \cdot| H_t)$ using heuristics such as maintaining a sliding-window, discounting past rewards, periodic restart, or detecting change-points. It then samples $\hat{\theta}_{t,a}(q)$ from the estimated posterior $\hat{\mathbb{P}}(\theta_{t,a}(q) \in \cdot | H_t)$ and selects an action that maximizes the sample $A_t \in \arg \max_{a \in \{1, 2\}}\hat{\theta}_{t,a}(q)$. 

In Example~\ref{example:bernoulli_example}, since $\theta_{t,1}(q) = 0.8$ is deterministic, i.e., known, $\hat{\theta}_{t,1}(q) = 0.8$. For all $q \in [0, 1]$, the estimated posterior $\hat{\mathbb{P}}(\theta_{t,2}(q) \in \cdot | H_t)$ places at least $0.5 q$ mass on $1$. 
Therefore, an aforementioned non-stationary variant of TS selects action $2$ with at least $0.5 q$ probability at each timestep. In contrast, if $q = 1$, TS selects action $2$ with probability $0$ at each timestep. Thus, an aforementioned non-stationary variant of TS is inconsistent with TS, i.e., 
does not satisfy Definition~\ref{definition:consistency_algorithm}. This inconsistency also implies that the non-stationary variants excessively explore by selecting action $2$ with a large probability in nearly-stationary bandits ($q$ close to $1$). 

We also consider Example~\ref{example:gaussian_example} as a second example where variants of TS are inconsistent with TS and over-explore in certain non-stationary bandits. If $\gamma = 0$, TS selects action $2$ with probability $0$ at each timestep. 
However, for all $\gamma \in [0, 1]$, the estimated posterior at initial timestep is $\hat{\mathbb{P}}(\theta_{1,2}(\gamma) \in \cdot) = \mathbb{P}(\theta_{1,2}(\gamma) \in \cdot) \sim \mathcal{N}(0.5, 1)$, and a variant of TS selects action $2$ with probability $\approx 0.38$. This inconsistency suggests that these non-stationary variants of TS over-explore in certain cases. 

\section{A Formal Definition of Non-Stationary Bandits}
\label{section:explained}
In this section, we provide a formal definition of non-stationarity that can unambiguously classify bandits. 
We discuss the rationale behind our definition: if two bandits offer an agent indistinguishable experiences, then they should be consistently classified as either both stationary or both non-stationary. 
Our definition applies seamlessly to bandits in the frequentist formulation as well, enabling unified understanding of non-stationarity.

\subsection{Our Definition}
We first introduce stationary bandits, followed by non-stationary bandits. 
\begin{definition} [\bf{Stationary Bandit}] A bandit $\{R_t\}_{t \in \mathbb{N}}$ is stationary if, for all horizon $T \in \mathbb{N}$, all sequences of actions $\{a_k\}_{k \in [T]}$, and all sequences of distinct timesteps $\{t_k\}_{k \in [T]}$ and $\{t^{\prime}_k\}_{k \in [T]}$, the sequences of rewards $\{R_{t_k, a_k}\}_{k \in [T]}$ and $\{R_{t^{\prime}_k, a_k}\}_{k \in [T]}$ are equal in distribution. 
\label{definition:stationary}
\end{definition}
\begin{definition} [\bf{Non-Stationary Bandit}] A bandit is non-stationary if it is not stationary. 
\label{definition:nonstationary}
\end{definition}

In other words, in a stationary bandit, the distribution of a sequence of rewards associated with distinct timesteps and a particular action sequence depends on only the action sequence but not the timesteps; in a non-stationary bandit, the distribution can depend on the timesteps. 


Notably, our definitions do not depend on a notion of reward distribution, and thus they resolve the ambiguity that arises in classifying bandits. For example, observe that in Example~\ref{example:easily}, 
the distribution of the sequence of rewards $\{R_{t_k, a_k}\}_{k \in [T]}$ is the joint distribution of independent Bernoulli random variables, each with mean $0.8$ if $a_k = 1$ and $0.5$ if $a_k = 2$. 
In Example~\ref{example:easily_g}
the sequence of rewards $\{R_{t_k, a_k}\}_{k \in [T]}$ is the joint distribution of independent Gaussian random variables, each with mean $0.8$ if $a_k = 1$ and $0.5$ if $a_k = 2$, and variance $1$. 
These distributions do not depend on the timesteps $\{t_k\}_{k \in [T]}$. Thus, by Definition~\ref{definition:nonstationary}, Examples~\ref{example:easily} and~\ref{example:easily_g} are unambiguously identified as stationary.

\subsection{Consistent Classification under Indistinguishable Experiences}

An alternative approach to defining non-stationary bandits is to 
consider those for which there does not exist a sequence of \say{reward distributions} that does not change over time. 
This is equivalent to defining non-stationary bandits as those for which the reward sequence is not exchangeable, by deFinetti's theorem. 
This alternative definition also addresses the ambiguity in classifying bandits; for example, Examples~\ref{example:easily} and~\ref{example:easily_g} are both stationary under this definition.  

However, we demonstrate that this alternative definition cannot consistently classify two bandits that offer an agent indistinguishable experiences as both stationary or both non-stationary, whereas our definition achieves this. Before formally defining \emph{indistinguishable experiences,} we illustrate this using examples. Consider two Bernoulli bandits, both determined by the same sequences $\{U_{t,a}\}_{t \in \mathbb{N}}$ of i.i.d. uniform random variables over $[0, 1]$, for $a \in \actions$. . 

\begin{example} [\bf{A Bernoulli Bandit with Independent Noises}]
 Consider a Bernoulli bandit $\{R^{\mathrm{ind}}_t\}_{t \in \mathbb{N}}$ for which $R^{\mathrm{ind}}_{t,a} = \mathbf{1}{\{U_{t,a} < 0.5\}}$ for all $t \in \mathbb{N}$ and $a \in \actions$. 
\label{example:independent}
\end{example}

\begin{example} [\bf{A Bernoulli Bandit with Dependent Noises}] Fix some $\overline{a} \in \actions$. 
Consider a Bernoulli bandit $\{R^{\mathrm{dep}}_t\}_{t \in \mathbb{N}}$, where $R^{\mathrm{dep}}_{t,a} = \mathbf{1}{\{U_{t, {a}} < 0.5\}}$ for all odd $t \in \mathbb{\mathbb{N}}$ and $a \in \actions$, and $R^{\mathrm{dep}}_{t,a} = \mathbf{1}{\{U_{t, \overline{a}} < 0.5\}}$ for all even $t \in \mathbb{N}$ and $a \in \actions$. 
\label{example:dependent}
\end{example}

If we interpret each reward $R_{t,a}$ as a noisy realization of $0.5$, then Example~\ref{example:independent} describes a bandit where the noises are always independent across actions, and Example~\ref{example:dependent} describes a bandit where the noises are independent across actions at odd timesteps and dependent at even timesteps. Note that an agent only observes the reward associated with the action it takes at each timestep, not those of the other actions. Therefore, the dependence structure of the noises across actions is irrelevant to an agent's experience, and Examples~\ref{example:independent} and~\ref{example:dependent} offer indistinguishable experiences to an agent. 

We believe that two bandits providing indistinguishable experiences should be classified as both stationary or both non-stationary. 
We would unequivocally all agree that  Example~\ref{example:independent} should be classified as stationary, and both our definition and the alternative definition would correctly classify it.
As for Example~\ref{example:dependent}, our definition desirably classifies it as stationary, because 
the reward sequence $\{R_{t_k, a_k}\}_{k \in [T]}$ is a sequence of i.i.d. $\mathrm{Bernoulli}(0.5)$ random variables.  
In contrast, the alternative definition classifies the bandit as non-stationary because the sequence of rewards is not exchangeable. Our definition consistently classifies Examples~\ref{example:independent} and~\ref{example:dependent} while the alternative one does not. 



\subsection{A Formal Definition of Indistinguishable Experiences}
As previously discussed, we believe that two bandits providing an agent with indistinguishable experiences should be classified as either both stationary or both non-stationary. To demonstrate how our definition meets this criterion beyond Examples~\ref{example:independent} and~\ref{example:dependent}, we introduce a binary relation on the set of all banadits that formalizes the concept of \emph{indistinguishable experience.} 

We introduce some useful notations followed by the binary relation. 
We use $A_t^{\pi \rightarrow R}$
to denote the action selected by an agent at timestep $t \in \mathbb{N}$ when executing $\pi$ in a bandit environment $\{R_t\}_{t \in \mathbb{N}}$, and use 
$H_t^{\pi \rightarrow R}$ to denote the corresponding history generated at timestep $t \in \mathbb{N}$.

\begin{definition} [\bf{Equivalence Relation $\sim$}]
Let $\{R_t\}_{t \in \mathbb{N}}$ and $\{R^{\prime}_t\}_{t \in \mathbb{N}}$ be two bandits. We say $\{R_t\}_{t \in \mathbb{N}} \sim \{R^{\prime}_t\}_{t \in \mathbb{N}}$ if,
for all policies $\pi$ and $T \in \mathbb{N}$, 
$H_T^{\pi \rightarrow R}$ and $H_T^{\pi \rightarrow R^{\prime}}$
 equal in distribution. 
\label{definition:equivalence}
\end{definition}


This relation $\sim$ is an equivalence relation, since it is reflexive, symmetric, and transitive. When two bandit environments $\{R_t\}_{t \in \mathbb{N}} \sim \{R^{\prime}_t\}_{t \in \mathbb{N}}$, we say that they are \emph{equivalent}. By Definition~\ref{definition:equivalence}, two bandits are equivalent if interacting with each of them provides an agent the same distribution over histories. This is what we refer to as \emph{indistinguishable experience.} 

We next present Theorem~\ref{theorem:equivalence}, which establishes that our definition of non-stationarity, i.e., Definition~\ref{definition:nonstationary},  would classify two equivalent bandits as both stationary or both non-stationary.  
\begin{restatable}{theorem}{theoremequiv} 
{\bf{(Equivalent Bandits Have the Same Non-Stationarity).}}
If two bandits $\{R_t\}_{t \in \mathbb{N}} \sim \{R^{\prime}_t\}_{t \in \mathbb{N}}$,  
then bandit $\{R_t\}_{t \in \mathbb{N}}$ is non-stationary if and only if bandit $\{R^{\prime}_t\}_{t \in \mathbb{N}}$ is non-stationary. 
\label{theorem:equivalence}
\end{restatable}

The proof of Theorem~\ref{theorem:equivalence}, and the proofs of other technical results can be found in the Appendix. 



\subsection{Relation to Exchangeability of the Reward Sequence}

Exploring the relationship between our definition of non-stationary bandits and the alternative definition involving the exchangeability of the reward sequence can provide interesting insights. In this regard, we present Theorem~\ref{theorem:iff}, which establishes that a bandit is stationary if and only if the experience it provides to an agent is indistinguishable from that provided by a bandit where the sequence of rewards is exchangeable.

\begin{restatable}{theorem}{iffstationarity}{\bf{(Necessary and Sufficient Condition for Stationarity).}}
A bandit $\{R_t\}_{t \in \mathbb{N}}$ is stationary if and only if there exists a bandit 
$\{R_t^{\prime}\}_{t \in \mathbb{N}}$ such that 
$\{R_t\}_{t \in \mathbb{N}} \sim \{R_t^{\prime}\}_{t \in \mathbb{N}}$ 
and that the sequence of rewards $\{R_t^{\prime}\}_{t \in \mathbb{N}}$ is exchangeable. 
\label{theorem:iff}
\end{restatable}

\subsection{Equivalence Classes and Strongly Stationary Bandits}
Equivalence relation $\sim$ partitions the set of all bandits into distinct equivalence classes: two bandits belong to the same class if and only if they are equivalent. By Theorem~\ref{theorem:equivalence}, the bandits in each class are either all stationary or all non-stationary. 
Although any bandit can represent its own class, it is an intriguing theoretical pursuit to identify more natural representatives for the classes comprising stationary bandits.  
For this purpose, we introduce strongly stationary bandits. 

\begin{definition} [\bf{Strongly Stationary Bandit}]
A stationary bandit is strongly stationary if for all $T \in \mathbb{N}$, $\{R_t\}_{t \in [T]}$ and $\{(R_{n|\actions| + 1, 1}, R_{n|\actions| + 2, 1}, ..., R_{n|\actions| + |\actions|, |\actions|})\}_{n \in [T]}$ equal in distribution. 
\label{definition:strongly_stationary}
\end{definition}

Note that, in a strongly stationary bandit $\{R_t\}_{t \in \mathbb{N}}$, the sequence of rewards is exchangeable. Because $\{R_t\}_{t \in \mathbb{N}}$ is stationary, the sequence of $|\actions|$-tuples $\{(R_{n|\actions| + 1, 1}, R_{n|\actions| + 2, 2}, ..., R_{n|\actions| + |\actions|, |\actions|})\}_{n \in \mathbb{N}_0}$ is exchangeable. 
In addition, $\{R_t\}_{t \in \mathbb{N}}$ is strongly stationary, so
by Definition~\ref{definition:strongly_stationary}, 
 the sequence $\{R_t\}_{t \in \mathbb{N}}$ is exchangeable. 

Theorem~\ref{theorem:strongly_stationary}  characterizes the equivalence classes of stationary bandits and their relation to strongly stationary bandits. 
It establishes that each equivalence class of stationary bandits contains at least one strongly stationary bandit. Furthermore, if two strongly stationary bandits, $\{R_t\}_{t \in \mathbb{N}}$ and $\{R^{\prime}_t\}_{t \in \mathbb{N}}$, belong to the same class, their sequences of rewards have the same distribution. 
This result makes strongly stationary bandits natural representatives of the equivalence classes of stationary bandits.

\begin{restatable}{theorem}{landscape} {\bf{(Landscape of Bandit Environments).}}
 A bandit $\{R_t\}_{t \in \mathbb{N}}$ is stationary if and only if it is equivalent to a strongly stationary bandit; two strongly stationary bandits are equivalent if and only if for all  $T \in \mathbb{N}$, $\{R_t\}_{t \in [T]}$ and $\{R^{\prime}_t\}_{t \in [T]}$ equal in distribution. 
\label{theorem:strongly_stationary}
\end{restatable}

\subsection{Application to Frequentist Setting}
This subsection explores the application of our definition of non-stationarity to a bandit under a frequentist formulation.
To begin, we introduce a formal definition of a bandit under a frequentist formulation.
The definition is adapted from the definition of a bandit with oblivious adversary \citep{slivkins2019introduction} and the definition of a stationary bandit \citep{lattimore2020bandit}. 
\begin{definition} [\bf{Bandit,  Frequentist}]
A bandit with a finite action set $\mathcal{A}$ is a set 
$\mathcal{Q}$ of infinite sequences of distributions, each over $\mathbb{R}^{|\actions|}$. For all $q = (q_1, q_2, ...) \in \mathcal{Q}$, the corresponding sequence of rewards $R_{1:+\infty}$ is independent, and each $R_t$ is distributed according to distribution $q_t$. 
\label{definition:bandit_freq}
\end{definition}

By putting a prior distribution over the set $\mathcal{Q}$, we can construct a bandit $\{R_t\}_{t \in \mathbb{N}}$ under the Bayesian formulation. 
Applying our definition of non-stationarity, 
a bandit 
$\mathcal{Q}$ 
under the frequentist formulation is stationary if and only if for any prior distribution $\nu$ over the set $\mathcal{Q}$, 
the bandit $\{R_t\}_{t \in \mathbb{N}}$ under a Bayesian formulation that is constructed 
by putting $\nu$ over $\mathcal{Q}$
is stationary. 

To gain insights into this, we introduce two  bandits under the frequentist formulation.

\begin{example} [\bf{A Bandit, Frequentist}]
Let $\mathcal{Q}$ be a bandit, where for all $q \in \mathcal{Q}$, $i, j \in \mathbb{N}$, $q_i = q_j$. 
\label{example:stationary_1}
\end{example} 

Example~\ref{example:stationary_1} is stationary, because for any bandit $\{R_t\}_{t \in \mathbb{N}}$ under a Bayesian formulation that is constructed by putting a prior over $\mathcal{Q}$, the sequence of rewards is exchageable. 

\begin{example} [\bf{A Bernoulli Bandit with Dependent Noises, Frequentist}]
Consider a bandit $\mathcal{Q} = \{q\}$, where $q = (q_1, q_2, \cdots)$ and $q_t$ is the distribution of $R_t$ in Example~\ref{example:dependent}. 
\label{example:stationary_2}
\end{example}
Example~\ref{example:stationary_2} is stationary. Although $q_1 \neq q_2$, 
this bandit 
and a different bandit  $\mathcal{Q}^{\prime} = \{q^{\prime}\}$ where $q^{\prime}_1 = q_2^{\prime} = \cdots = q_1$ 
offer indistinguishable experience to 
an agent. 

In summary, we can apply our definition of non-stationarity to classify bandits under a frequentist formulation. Example~\ref{example:stationary_1} is classified as stationary. Some other bandits, e.g. Example~\ref{example:stationary_2}, are also considered stationary. 

\section{Examples of Notion of Regret, Agent Design, and More}
Our formal definition of non-stationarity significantly improves our understanding of non-stationary bandit learning. It not only resolves the ambiguity that arise in classifying bandits but can also facilitate the development of regret notions and agent designs that align with conventional notions and stationary bandit learning algorithms, respectively. This section provides examples that highlight these developments. Since conventional notions and algorithms are designed only for bandits where reward sequence is exchangeable, we restrict our attention to such bandits when discussing consistency. 
\paragraph{Notion of Regret} We first introduce a notion of regret, with respect to any stochastic process $\chi$. 
\begin{definition}
[\bf{Regret}]
For all bandits $\{R_t\}_{t \in \mathbb{N}}$, 
stochastic processes $\chi = \{\chi_t\}_{t \in \mathbb{N}}$, policies $\pi$ and $T \in \mathbb{N}$, the regret of $\pi$ with respect to $\chi$ is defined as
\begin{align*}
\mathrm{Regret}^{\chi}(\{R_t\}_{t \in \mathbb{N}}; T; \pi) = 
\mathrm{Benchmark}^{\chi}(\{R_t\}_{t \in \mathbb{N}}; T) - 
\sum_{t=0}^{T-1} \mathbb{E}\left[R_{t+1, A_t^{\pi}}\right],  
\end{align*}
where 
$\mathrm{Benchmark}^{\chi}(\{R_t\}_{t \in \mathbb{N}}; T) = \sum_{t = 0}^{T-1} \mathbb{E}[\max_{a \in \actions} \E[R_{t+1, a}|\chi_t]]$. 
\label{definition:regret}
\end{definition}
Our notion of regret is flexible and encompasses various existing definitions. 
For instance, 
it is equivalent to dynamic regret \citep{besbes2019optimal, besson2019generalized, cheung2019learning, garivier2008upper, gupta2011thompson, mellor2013thompson, min2023, raj2017taming, slivkins2008adapting} 
 if we set $\chi_t$ to be the reward distribution at timestep $t+1$. 
Furthermore, our notion is equivalent to a regret introduced by \cite{liu2022nonstationary} if we set $\chi_t = R_{1:t}$.  

This notion also already encompasses definitions that are consistent with the conventional notion of regret, and provides room for further exploration and refinement. 
We highlight one such definition: for any bandit $\{R_t\}_{t \in \mathbb{N}}$, 
let $\chi_t = R_{-\infty:t}$. The negative-indexed rewards are constructed by extending the bandit if the reward sequence is reversible, and are constructed arbitrarily otherwise. This is intuitively consistent with the conventional notion: in a stationary bandit with exchangeable rewards, $\mathbb{E}[R_{t+1}|R_{-\infty:t}] = \mathbb{E}[R_{t+1}|P]$, where $P$ is the invariant reward distribution, i.e., the rewards are i.i.d. according to $P$, conditioned on $P$.


\paragraph{Agent Design} Predictive sampling (PS), an algorithm introduced by \cite{liu2022nonstationary}, samples at each timestep $t$ the sequence of future rewards $R_{t+2:+\infty}$, assuming that the sample is the true future rewards, and acts optimally. PS is proven to be equivalent to TS in stationary bandits with exchangeable rewards. 

\paragraph{Regret Analysis} 
To demonstrate the effectiveness of the introduced notion of regret in evaluating agent performance, we conduct an information-theoretic regret analysis \citep{bubeck2015bandit, dong2018information, hao2022contextual, lattimore2019information, lu2023reinforcement, neu2022lifting, RussoMOR2014, russo2016information, russo2018learning}. This analysis builds upon and directly extends the work of \citep{liu2022nonstationary}; it 
applies to any agent and any bandit---stationary or non-stationary. We provide details in the Appendix. 

\section{Conclusion}
This paper introduces a formal definition of non-stationary bandits that umambiguously classify bandits. 
This definition applies 
seamlessly to bandits under both Bayesian and frequentist formulations. 
In addition, we identify inconsistencies and ineffectiveness in some widely-used notions of regret, such as the dynamic regret and weak regret, as well as popular non-stationary bandit learning algorithms, including several variants of TS. 
These notions of regret and algorithms, similarly to the existing definitions of non-stationary bandits, also exhibit dependence on a latent sequence, and are thus ineffective. 
Our definition of non-stationary bandits enhances our understanding of non-stationary bandits and serves as a foundation for future development of more effective notions of regret and algorithms that align with their stationary counterparts. 

\subsection*{Acknowledgement}
This work is partially supported by Army Research Office (ARO) grant W911NF2010055, the MS\&E Fellowship Fund of Stanford University, and Statistics for Improving Insights, Models, and Decisions Grant of Meta.

\bibliographystyle{unsrt}
\small
\bibliography{definition}

\appendix
\section{Organization of the Appendix}
We organize the appendix as follows:
\begin{itemize}
\item The proof of Theorem~\ref{theorem:inconsistency} is given in Section~\ref{section:inconsistency}. 
\item The proof of Theorem~\ref{theorem:equivalence} is given in Section~\ref{section:equivalence}. 
\item The proof of Theorem~\ref{theorem:iff} is given in Section~\ref{section:iff}. 
\item The proof of Theorem~\ref{theorem:strongly_stationary} is given in Section~\ref{section:strongly}. 
\item A regret analysis is provided in Section~\ref{section:analysis}. 
\end{itemize}

\section{Proof of Theorem~\ref{theorem:inconsistency}}
\label{section:inconsistency}
\inconsistency*
\begin{proof}
In the set $\mathcal{B}$, the bandits only differ by $q_a$'s (resp. $\gamma_a$'s). 
We use $\{R^0_t\}_{t \in \mathbb{N}}$ to denote the bandit where $q_a = 0$ (resp. $\gamma_a = 1$) for all $a \in \actions$, and 
$\{R^1_t\}_{t \in \mathbb{N}}$ to denote the bandit where $q_a = 1$ (resp. $\gamma_a = 0$) for all $a \in \actions$. 

Then for all bandits $\{R_t^{\prime}\}_{t \in \mathbb{N}} \in \mathcal{B}$, the performance benchmark of the dynamic regret is equivalent to that of the conventional regret in $\{R_t^{0}\}_{t \in \mathbb{N}}$: 
\begin{align}
\mathrm{Benchmark}^{\mathrm{D}}\left(\left\{R_t^{\prime}\right\}_{t \in \mathbb{N}}; T\right) = \mathrm{Benchmark}^{\mathrm{C}}\left(\left\{R_t^{0}\right\}_{t \in \mathbb{N}}; T\right)
= T\mathbb{E}\left[\max_{a \in \actions} \theta_{1,a}\right]. 
\label{eq:dynamic_regret}
\end{align}
For all bandits $\{R_t^{\prime}\}_{t \in \mathbb{N}} \in \mathcal{B}$, determined by $q_a^{\prime}$ (resp. $\gamma_a^{\prime}$), if $q_a^{\prime} > 0$ (resp. $\gamma_a^{\prime} < 1$), 
the performance benchmark of the weak regret
is equivalent to that of the conventional regret in $\{R_t^1\}_{t \in \mathbb{N}}$:
\begin{align}
\mathrm{Benchmark}^{\mathrm{W}}\left(\left\{R_t^{\prime}\right\}_{t \in \mathbb{N}}; T\right) = \mathrm{Benchmark}^{\mathrm{C}}\left(\left\{R_t^{1}\right\}_{t \in \mathbb{N}}; T\right)
= T \max_{a \in \actions} \mathbb{E}[\theta_{1,a}]. 
\label{eq:weak_regret}
\end{align}
Therefore,  if $q_a^{(k)} \rightarrow 1$ (resp. $\gamma_a^{(k)} \rightarrow 0$) for all $a \in \actions$, then there exists a stationary bandit $\{R_t^1\}_{t \in \mathbb{N}}$ such that the sequence of bandits converges to $\{R_t^1\}_{t \in \mathbb{N}}$, and for all $T \in \mathbb{N}$, the performance benchmark of dynamic regret 
\begin{align*}
&\ \lim_{k \rightarrow +\infty}  \mathrm{Benchmark}^{\mathrm{D}}\left(\left\{R^{(k)}_t\right\}_{t \in \mathbb{N}}; T\right) - \mathrm{Benchmark}^{\mathrm{C}}\left(\left\{R_t^1\right\}_{t \in \mathbb{N}}; T\right) \\
= &\ \mathrm{Benchmark}^{\mathrm{C}}\left(\left\{R^0_t\right\}_{t \in \mathbb{N}}; T\right) - \mathrm{Benchmark}^{\mathrm{C}}\left(\left\{R_t^1\right\}_{t \in \mathbb{N}}; T\right)\\
= &\ T \left(\mathbb{E}\left[\max_{a \in \actions} \theta_{1,a}\right] - \max_{a \in \actions} \mathbb{E}[\theta_{1,a}]\right),
\end{align*}
where the first equality follows from \eqref{eq:dynamic_regret}. 

In addition, if $q_a^{(k)} \rightarrow 0$ (resp. $\gamma_a^{(k)} \rightarrow 1$) for all $a \in \actions$, then there exists a stationary bandit $\{R_t^0\}_{t \in \mathbb{N}}$ such that the sequence of bandits converges to $\{R_t^0\}_{t \in \mathbb{N}}$, and for all $T \in \mathbb{N}$, the performance benchmark of weak regret 
\begin{align*}
&\ \lim_{k \rightarrow +\infty}  \mathrm{Benchmark}^{\mathrm{W}}\left(\left\{R^{(k)}_t\right\}_{t \in \mathbb{N}}; T\right) - \mathrm{Benchmark}^{\mathrm{C}}\left(\left\{R_t^0\right\}_{t \in \mathbb{N}}; T\right) \\
= &\ \mathrm{Benchmark}^{\mathrm{C}}\left(\left\{R^1_t\right\}_{t \in \mathbb{N}}; T\right) - \mathrm{Benchmark}^{\mathrm{C}}\left(\left\{R_t^0\right\}_{t \in \mathbb{N}}; T\right) \\
= &\ - T \left(\mathbb{E}\left[\max_{a \in \actions} \theta_{1,a}\right] - \max_{a \in \actions} \mathbb{E}[\theta_{1,a}]\right),
\end{align*}
where the first equality follows from \eqref{eq:weak_regret}. 
\end{proof}

\section{Proof of Theorem~\ref{theorem:equivalence}}
\label{section:equivalence}
\theoremequiv*
\begin{proof}
Let $T \in \mathbb{N}$, and $\{a_k\}_{k \in [T]}$ be a sequence of actions, and $\{t_k\}_{k \in [T]}$ and $\{t_k^{\prime}\}_{k \in [T]}$ be sequences of distinct timesteps.  

Now consider a policy $\pi$ that executes a deterministic action $a_k$ at timestep $t_k$ for $k \in [T]$, and another policy $\pi^{\prime}$ that executes a deterministic action $a_k$ at timestep $t_k^{\prime}$ for $k \in [T]$. 
Since $\{R_t\}_{t \in \mathbb{N}} \sim \{R^{\prime}_t\}_{t \in \mathbb{N}}$, the histories $H_T^{\pi \rightarrow R}$ and $H_T^{\pi \rightarrow R^{\prime}}$
 have the same distribution; so do $H_T^{\pi^{\prime} \rightarrow R}$ and $H_T^{\pi^{\prime} \rightarrow R^{\prime}}$. Therefore, 
 \begin{align*}
&\ \Pr\left(\left\{R_{t_k, a_k}\right\}_{k \in [T]} \in \cdot\right) =  \Pr\left(\left\{R^{\prime}_{t_k, a_k}\right\}_{k \in [T]} \in \cdot\right),  \\
\text{ and } &\ \Pr\left(\left\{R_{t^{\prime}_k, a_k}\right\}_{k \in [T]} \in \cdot\right) =  \Pr\left(\left\{R^{\prime}_{t^{\prime}_k, a_k}\right\}_{k \in [T]} \in \cdot\right). \numberthis
\label{eq:this}
 \end{align*}
In addition, by the stationarity of $\{R_t\}_{t \in \mathbb{N}}$, 
the reward sequences $\{R_{t_k, a_k}\}_{k \in [T]}$ and $\{R_{t^{\prime}_k, a_k}\}_{k \in [T]}$ are equal in distribution. Combining with \eqref{eq:this}, 
we have that 
$\{R^{\prime}_{t_k, a_k}\}_{k \in [T]}$ and $\{R^{\prime}_{t^{\prime}_k, a_k}\}_{k \in [T]}$ are equal in distribution. 
\end{proof}

\section{Proof of Theorem~\ref{theorem:iff}}
\label{section:iff}
We first establish Lemma~\ref{lemma:equivalence}, which provides a necessary and sufficient condition for two bandits to be equivalent; it is also key to the proof of Theorem~\ref{theorem:iff}. 
We present the lemma followed by its proof.  
\begin{lemma} [\bf{Necessary and Sufficient Condition for Equivalence}]
Let $\{R_t\}_{t \in \mathbb{N}}$ and $\{R^{\prime}_t\}_{t \in \mathbb{N}}$ be two bandits with the same finite set $\actions$ of actions. The bandits $\{R_t\}_{t \in \mathbb{N}} \sim \{R^{\prime}_t\}_{t \in \mathbb{N}}$ if and only if
for all $T \in \mathbb{N}$ and all sequences of actions $\{a_t\}_{t \in [T]}$ where each $a_t \in \actions$, the sequences of rewards $\{R_{t, a_t}\}_{t \in [T]}$ and $\{R^{\prime}_{t, a_t}\}_{t \in [T]}$ equal in distribution.
\label{definition:equivalence}
\label{lemma:equivalence}
\end{lemma}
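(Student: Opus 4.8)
The plan is to prove both directions of the biconditional. The ``only if'' direction is immediate: if $\{R_t\} \sim \{R'_t\}$, then for any fixed action sequence $\{a_t\}_{t=1}^T$ we may consider the deterministic policy $\pi$ that plays $a_t$ at timestep $t$; the history $H_T^{\pi \rightarrow \{R_t\}}$ then records exactly the reward subsequence $\{R_{t,a_t}\}_{t=1}^T$ (interleaved with the deterministic actions), and likewise for $\{R'_t\}$, so equality in distribution of the histories forces equality in distribution of $\{R_{t,a_t}\}_{t=1}^T$ and $\{R'_{t,a_t}\}_{t=1}^T$. This is essentially the same move already used in the proof of Theorem~\ref{theorem:equivalence}.

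For the ``if'' direction, assume that $\{R_{t,a_t}\}_{t=1}^T \eqdist \{R'_{t,a_t}\}_{t=1}^T$ for every $T$ and every action sequence, and fix an arbitrary policy $\pi$ and horizon $T$. I would show by induction on $t$ that $H_t^{\pi \rightarrow \{R_s\}}$ and $H_t^{\pi \rightarrow \{R'_s\}}$ are equal in distribution. The base case $t=0$ is trivial (both are the empty history). For the inductive step, condition on the history $H_t^{\pi}$ taking a particular value $h_t$, which fixes the realized action sequence $a_0, \ldots, a_{t-1}$; the new action $A_t^\pi$ is drawn from $\pi(\cdot \mid h_t)$, independently of the reward process given $h_t$, so it suffices to argue that $R_{t+1, A_t^\pi}$ has the same conditional law under both bandits given $h_t$ and $A_t^\pi = a$. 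Because the agent only observes the rewards of the actions it actually played, the event $\{H_t^\pi = h_t\}$ together with $\{A_t^\pi = a\}$ is an event about the finite reward subsequence $(R_{1,a_0}, \ldots, R_{t,a_{t-1}})$ and the independent policy randomization, and the quantity of interest is the joint law of $(R_{1,a_0}, \ldots, R_{t,a_{t-1}}, R_{t+1,a})$; by hypothesis (applied to the length-$(t+1)$ action sequence $a_0, \ldots, a_{t-1}, a$) this joint law is identical for $\{R_s\}$ and $\{R'_s\}$. Combining with the induction hypothesis gives equality in distribution of $H_{t+1}^\pi$ under the two bandits, completing the induction and hence the proof.

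The main subtlety — and the step I would be most careful about — is the measure-theoretic bookkeeping in the inductive step: making precise that a realized history together with the chosen action is a deterministic function of (i) a finite reward subsequence indexed by the realized action sequence and (ii) the policy's internal randomness, and that the latter is independent of the reward process. The paper's setup already builds in the key independence ($A_t^\pi \perp R_{1:\infty} \mid H_t^\pi$), so the argument is really just an unwinding of the recursive definition of $H_{t+1}^\pi$; the cleanest way to present it is probably to couple the two bandits on a common probability space carrying one shared copy of the policy randomization, and then invoke the hypothesis to transport the law of each relevant finite reward subsequence. No single step is deep, but the statement should be phrased to make clear that the only property of a bandit an agent can ever probe is the collection of joint laws of reward subsequences along deterministic action sequences, which is exactly what the lemma asserts.
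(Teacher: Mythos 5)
Your proposal is correct and follows essentially the same route as the paper: the forward direction via deterministic policies, and the reverse direction by induction on the history length, factoring the law of $H_{t+1}^{\pi}$ into the induction hypothesis, the policy's conditional action distribution, and the conditional reward law, the last of which is matched across the two bandits by applying the hypothesis to the realized action sequence extended by one step. The measure-theoretic bookkeeping you flag is handled in the paper exactly as you describe, using the built-in independence $R_{t+1} \perp A_t^{\pi} \mid H_t^{\pi}$.
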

\begin{proof}
Throughout this proof, we use $H_t^{\pi}$ to denote $H_t^{\pi \rightarrow R}$, $A_t$ to denote $A_t^{\pi \rightarrow R}$, $H_t^{\prime \pi}$ to denote $H_t^{\pi \rightarrow R^{\prime}}$, and $A_t^{\prime\pi}$ to denote $A_t^{\pi \rightarrow R^{\prime}}$. In addition, let $\mathcal{H}_t$ denote the set of histories of length $t$, i.e., the set of sequences of $t$ action-reward pairs. 

We prove each of the two directions below. 
\begin{enumerate}
\item \say{$\Rightarrow$}: Suppose that $\{R_t\}_{t \in \mathbb{N}} \sim \{R^{\prime}_t\}_{t \in \mathbb{N}}$.
    For all $T \in \mathbb{N}$, and all sequences of actions $\{a_t\}_{t \in [T]}$, we can construct a deterministic policy $\pi^{\{a_t\}_{t \in [T]}}$ that executes action $a_t$ at timestep $t$ for all $t \in \mathbb{N}_0$, $t \leq T$. Then $H_t^{\pi^{\{a_t\}_{t \in [T]}}}$ and $H_t^{\prime\pi^{\{a_t\}_{t \in [T]}}}$ have the same distribution. Hence,
    \begin{align*}
 \Pr\left(\{R_{t, a_t}\}_{t \in [T]} \in \cdot\right) 
 = \Pr\left(\{R^{\prime}_{t, a_t}\}_{t \in [T]} \in \cdot\right). 
    \end{align*}
\item \say{$\Leftarrow$}: Suppose that for all $T \in \mathbb{N}$ and all sequences of actions $\{a_t\}_{t \in [T]}$, where each $a_t \in \actions$, the sequence of rewards $\{R_{t, a_t}\}_{t \in [T]}$ and $\{R^{\prime}_{t, a_t}\}_{t \in [T]}$ have the same distribution. Let $\pi$ be any policy. We prove the hypothesis that
$H_T^{\pi}$ and $H_T^{\prime\pi}$
 have the same distribution for all $T \in \mathbb{N}_0$ by induction. 
    \begin{enumerate} [(a)]
    \item For $T = 0$, $H_0^{\pi}$ and $H_0^{\prime\pi}$ are both empty history, so they have the same distribution. 
    \item Now suppose that the hypothesis holds for $T = t$, i.e., $H_t^{\pi}$ and $H_t^{\prime \pi}$ have the same distribution. We prove for $t + 1$. 
    First, observe that for all $h_t \in \mathcal{H}_t$, 
    \begin{align}
        \Pr\left(A_t^{\pi} \in \cdot | H_t^{\pi} = h_t\right) = \pi\left(\cdot | h_t\right) = \Pr\left(A_t^{\prime\pi} \in \cdot | H_t^{\prime\pi} = h_t\right).
    \label{eq:equiv_1}
    \end{align}
    Next, for all $h_t \in \mathcal{H}_t$, $a_t \in \actions$, we have 
    \begin{align*}
        &\ \Pr\left(R_{t+1, A_t^{\pi}} \in \cdot | H_t^{\pi} = h_t, A_t^{\pi} = a_t\right) \\
        {=} &\ \Pr\left(R_{t+1, a_t} \in \cdot | H_t^{\pi} = h_t\right) \\
        \stackrel{}{=} &\ \Pr\left(R_{t+1, a_t} \in \cdot | R_{1, a_0} = r_{1, a_0}, R_{2, a_1} = r_{2, a_1}, ... , R_{t, a_{t-1}} = r_{t, a_{t-1}}\right) \\
       {=} &\ \Pr\left(R^{\prime}_{t+1, a_t} \in \cdot | R^{\prime}_{1, a_0} = r_{1, a_0}, R^{\prime}_{2, a_1} = r_{2, a_1}, ... , R^{\prime}_{t, a_{t-1}} = r_{t, a_{t-1}}\right) \\
        \stackrel{}{=} &\ \Pr\left(R^{\prime}_{t+1, a_t} \in \cdot | H_t^{\prime \pi} = h_t\right) \\
        \stackrel{}{=} &\ \Pr\left(R^{\prime}_{t+1, A_t^{\prime\pi}} \in \cdot | H_t^{\prime\pi} = h_t, A_t^{\prime\pi} = a_t\right), \numberthis
    \label{eq:equiv_2}
    \end{align*}
    where the first equality follows from $R_{t+1} \perp A_t^{\pi} | H_t^{\pi}$, the third equality holds because of the assumption that for all $T \in \mathbb{N}_0$ and all sequence of actions $\{a_t\}_{t \in [T]}$, where each $a_t \in \actions$, the sequence of rewards $\{R_{t, a_t}\}_{t \in [T]}$ and $\{R^{\prime}_{t, a_t}\}_{t \in [T]}$ have the same distribution, and the last equality follows from $R^{\prime}_{t+1} \perp A_t^{\prime\pi} | H_t^{\prime\pi}$. 
    
    Recall that $H_{t+1}^{\pi} = (H_t^{\pi}, A_t^{\pi}, R_{t+1, A_t^{\pi}})$ and $H_{t+1}^{\prime\pi} = (H_t^{\prime\pi}, A_t^{\prime\pi}, R^{\prime}_{t+1, A_t^{\prime\pi}})$. For all $h_{t+1} \in \mathcal{H}_{t+1}$, we let $h_t \in \mathcal{H}_t$, $a_t \in \actions$, and $r_{t+1, a_t} \in \mathbb{R}$ be defined such that $h_{t+1} = (h_t, a_t, r_{t+1, a_t})$. Therefore, for all $t \in \mathbb{N}_0$ and $h_{t+1} \in \mathcal{H}_{t+1}$, 
    \begin{align*}
        &\ \Pr(H_{t+1}^{\pi} = h_{t+1})\\
        = &\ \Pr(H_t^{\pi} = h_t) \Pr(A_t^{\pi} = a_t | H_t^{\pi} = h_t) \Pr(R_{t+1, A_t^{\pi}} = r_{t+1, a_t} | H_t^{\pi} = h_t, A_t^{\pi} = a_t)\\
        \stackrel{}{=} &\ \Pr\left(H_t^{\prime\pi} = h_t\right) \Pr\left(A_t^{\prime\pi} = a_t | H_t^{\prime\pi} = h_t\right) \Pr\left(R^{\prime}_{t+1, A_t^{\prime\pi}} = r_{t+1, a_t} | H_t^{\prime\pi} = h_t, A_t^{\prime\pi} = a_t\right)\\
        = &\ \Pr\left(H_{t+1}^{\prime \pi} = h_{t+1}\right), 
    \end{align*}
    where the second equality follows from our hypothesis for $T = t$, \eqref{eq:equiv_1} and \eqref{eq:equiv_2}. 
    \end{enumerate}
    Hence, we've proved by induction that $H_T^{\pi}$ and $H_T^{\prime \pi}$ have the same distribution for all policies $\pi$ and all $T \in \mathbb{N}_0$. 
\end{enumerate}
\end{proof}

We are now ready to prove Theorem~\ref{theorem:iff}. 
\iffstationarity*
\begin{proof}
We prove the two directions, respectively. 
\begin{enumerate}
\item \say{$\Leftarrow$}: Suppose there exists a bandit environment $\{R_t^{\prime}\}_{t \in \mathbb{N}}$ where the sequence of rewards is exchangeable such that $\{R_t\}_{t \in \mathbb{N}} \sim \{R_t^{\prime}\}_{t \in \mathbb{N}}$. Then for all $T \in \mathbb{N}$, any sequence of actions $\{a_k\}_{k \in [T]}$ in the shared action set, and any sequences of distinct timesteps $\{t_k\}_{k \in [T]}$ and $\{t^{\prime}_k\}_{k \in [T]}$, 
\begin{align*}
   &\ \Pr\left(\{R_{t_k, a_k}\}_{k \in [T]} \in \cdot\right)
    = \Pr\left(\left\{R^{\prime}_{t_k, a_k}\right\}_{k \in [T]} \in \cdot\right) \\
    = &\ \Pr\left(\left\{R^{\prime}_{t^{\prime}_k, a_k}\right\}_{k \in [T]} \in \cdot\right) 
    = \Pr\left(\left\{R_{t^{\prime}_k, a_k}\right\}_{k \in [T]} \in \cdot\right),
\end{align*}
where the first and last equalities follow from the equivalence of the two bandit environments, and the second equality follows from the fact that the sequence of rewards in $\{R^{\prime}_t\}_{t \in \mathbb{N}}$ is exchangeable. Thus, we have shown that $\{R_{t}\}_{t \in \mathbb{N}}$ is stationary. 
\item \say{$\Rightarrow$}: Now suppose that $\{R_{t}\}_{t \in \mathbb{N}}$ is stationary. As an outline of the proof, we construct a bandit $\{R^{\prime}_{t}\}_{t \in \mathbb{N}}$ where the sequence of rewards is exchangeable and show that $\{R_{t}\}_{t \in \mathbb{N}} \sim \{R^{\prime}_{t}\}_{t \in \mathbb{N}}$. 

We first look at the sequence of $|\actions|-$tuples $\{(R_{n|\actions|+1, 1}, \cdots , R_{n|\actions| + |\actions|, |\actions|})\}_{n \in \mathbb{N}_0}$. Since the bandit $\{R_t\}_{t \in \mathbb{N}}$ is stationary,  the sequence of $|\actions|$-tuples is exchangeable. By de Finetti's theorem, there exists a random distribution $P$ over $\R^{|\actions|}$ such that conditioned on $P$, 
\begin{align*}
    \left\{\left(R_{n|\actions|+1, 1}, \cdots , R_{n|\actions| + |\actions|, |\actions|}\right)\right\}_{n \in \mathbb{N}_0} \stackrel{\mathrm{i.i.d.}}{\sim} P,  
\end{align*}
In other words, conditioned on $P$, the $|\actions|$-tuples are independently and identically distributed according to $P$. 

We next construct a bandit $\{R^{\prime}_{t}\}_{t \in \mathbb{N}}$ with a set $\actions$ of actions, where the rewards are independently and identically distributed according to $P$, conditioned on $P$. 
We also require that conditioned on $P$, the reward sequence in $\{R^{\prime}_t\}_{t \in \mathbb{N}}$ is independent of the reward sequence in $\{R_t\}_{t \in \mathbb{N}}$. 
Observe that the sequence of rewards in the bandit $\{R_t^{\prime}\}_{t \in \mathbb{N}}$ is exchangeable. 

Below we show that the two bandit environments $\{R_{t}\}_{t \in \mathbb{N}}$ and $\{R^{\prime}_{t}\}_{t \in \mathbb{N}}$ are equivalent. 
According to Lemma~\ref{lemma:equivalence},  
we need to show that for all $T \in \mathbb{N}$ and all sequences of actions $\{a_t\}_{t \in [T]}$, the sequences of rewards $\{R_{t, a_t}\}_{t \in [T]}$ and $\{R^{\prime}_{t, a_t}\}_{t \in [T]}$ have the same distribution.  
With out loss of generality, let $\actions = \{1, 2, ... , |\actions|\}$. 
We define $S_k = \{t \in [T]: a_t = k\}$ for all $k \in \actions$. 
Then $S_k$ represents the set of timesteps that action $k$ appears in the sequence $\{a_t\}_{t \in [T]}$. 
It suffices to show that 
\begin{align}
     \Pr\left(\left(R_{S_1, 1}, R_{S_2, 2}, \cdots , R_{S_{|\actions|}, |\actions|}\right) \in \cdot\right)
    = \Pr\left(\left(R^{\prime}_{S_1, 1}, R^{\prime}_{S_2, 2}, \cdots, R^{\prime}_{S_{|\actions|},|\actions|}\right) \in \cdot\right).
\label{eq:equiv_envs}
\end{align}
Let $C_k = \sum_{i = 1}^k |S_i|$ for all $k \in \actions$. Then $C_k$ represents the number of timesteps that actions $1$ through $k$ appear in the sequence $\{a_t\}_{t \in [T]}$. Then we have
\begin{align*}
       &\ \Pr\left(\left(R_{S_1, 1}, R_{S_2, 2}, \cdots , R_{S_{|\actions|}, |\actions|}\right) \in \cdot\right)\\
       \stackrel{}{=} &\ \Pr\left(\left(\{R_{n|\actions| + 1, 1}\}_{n =1}^{C_1}, \{R_{n|\actions| + 2, 2}\}_{n = C_1 + 1}^{C_2}, \cdots, \{R_{n|\actions| + |\actions|, |\actions|}\}_{n = C_{|\actions| - 1}+1}^{C_{|\actions|}} \right)\in \cdot \right)\\
       \stackrel{}{=} &\ \Pr\left(\left(R^{\prime}_{1 : C_1, 1}, R^{\prime}_{C_1 + 1: C_2, 2}, \cdots , R^{\prime}_{C_{|\actions| - 1} + 1: C_{|\actions|},|\actions|} \right)\in \cdot\right)\\
       \stackrel{}{=} &\ \Pr\left(\left(R^{\prime}_{S_1, 1}, R^{\prime}_{S_2, 2}, \cdots, R^{\prime}_{S_{|\actions|},|\actions|}\right) \in \cdot\right), 
\end{align*}
where the first equality follows from the stationarity of $\{R_t\}_{t \in \mathbb{N}}$, 
the second equality follows from the construction of $\{R^{\prime}_t\}_{t \in \mathbb{N}}$, 
and the last equality from that the sequence of rewards associated with $\{R_t^{\prime}\}_{t \in \mathbb{N}}$ is exchangeable. Hence, we've shown that \eqref{eq:equiv_envs} holds.
\end{enumerate}
\end{proof}

\section{Proof of Theorem~\ref{theorem:strongly_stationary}}
\label{section:strongly}
\landscape*
\begin{proof} We prove each of the two statements below. 
\begin{itemize}
\item We prove the two directions, respectively.  
\begin{enumerate} 
    \item \say{$\Leftarrow$}: First observe that in a strongly stationary bandit, the sequence of rewards is exchangeable. Then by Theorem~\ref{theorem:iff}, we prove this direction. 
    \item \say{$\Rightarrow$}: Suppose that $\{R_t\}_{t \in \mathbb{N}}$ is stationary. We construct a strongly stationary bandit $\{R^{\prime}_t\}_{t \in \mathbb{N}}$ such that  $\{R_t\}_{t \in \mathbb{N}} \sim \{R^{\prime}_t\}_{t \in \mathbb{N}}$. Specifically, we construct $\{R^{\prime}_t\}_{t \in \mathbb{N}}$ in the same way as in the proof of Theorem~\ref{theorem:iff}. Now we show that the bandit $\{R^{\prime}_t\}_{t \in \mathbb{N}}$ is strongly stationary. For all $T \in \mathbb{N}$, 
    \begin{align*}
        \Pr\left(\{R^{\prime}_{t}\}_{t \in [T]} \in \cdot\right) 
        = &\ \Pr\left(\{(R_{n|\actions|+1, 1}, ... , R_{n|\actions| + |\actions|, |\actions|})\}_{n \in [T]} \in \cdot\right) \\
        = &\ 
        \Pr\left(\{(R^{\prime}_{n|\actions|+1, 1}, ... , R^{\prime}_{n|\actions| + |\actions|, |\actions|})\}_{n \in [T]} \in \cdot\right), 
    \end{align*}
    where the first equality follows from the construction of $\{R^{\prime}_t\}_{t \in [T]}$, and the second equality follows from the equivalence of the two bandit environments.
    \end{enumerate}
\item By Lemma~\ref{lemma:equivalence}, the direction \say{$\Leftarrow$} is straightforward. Now we prove the other direction \say{$\Rightarrow$}. 
    Suppose two strongly stationary bandits $\{R_t\}_{t \in \mathbb{N}}$ and $\{R^{\prime}_t\}_{t \in \mathbb{N}}$ are equivalent.
    Then for all $T \in \mathbb{N}_0$, 
    \begin{align*}
        \Pr\left(\{R_{t}\}_{t \in [T]} \in \cdot\right) 
        = &\ \Pr\left(\{(R_{n|\actions|+1, 1}, ... , R_{n|\actions| + |\actions|, |\actions|})\}_{n \in [T]} \in \cdot\right)\\
        = &\  
        \Pr\left(\{(R^{\prime}_{n|\actions|+1, 1}, ... , R^{\prime}_{n|\actions| + |\actions|, |\actions|})\}_{n \in [T]} \in \cdot\right)\\
        = &\ \Pr\left(\{R^{\prime}_{t}\}_{t \in [T]} \in \cdot\right), 
    \end{align*}
where the first and the third equalities follow from the fact that the two bandits are strongly stationary, and the second inequality follows from equivalence. 
\end{itemize}
\end{proof}

\section{Regret Analysis}
\label{section:analysis}
Recall that by taking $\chi_t = R_{1:t}$ or $\chi_t = R_{t+2:+\infty}$, we retrieve the two definitions of regret introduced by \citep{liu2022nonstationary}, and that PS is proven to be consistent with its stationary counterpart, TS. The regret analysis proposed by \citep{liu2022nonstationary} suggests that we can utilize notions of regret that are consistent with the conventional notion to derive regret analysis to evaluate the performance of agents, including one that is consistent with its stationary counterpart. Here we present a direct generalization of that regret analysis. 

\begin{theorem} [\bf{General Regret Bound}]
\label{theorem:general_regret} 
For all deterministic function sequences $\{f_t\}_{t\in \mathbb{N}}$, stochastic processes $\chi = \{\chi_t\}_{t \in \mathbb{N}}$, policies $\pi$, and $T \in \mathbb{N}$, if $\alpha_t = f_t(R_t)$, then we have 
\begin{align*}
    {\mathrm{Regret}}^{\chi}(T; \pi) \leq \sqrt{\sum_{t = 0}^{T-1} \Gamma_t^{\pi} \left(\chi; \alpha\right) V_T\left(\alpha\right)}, 
\end{align*}
where $\Gamma_t^{\pi}(\chi; \alpha) =  \frac{\mathbb{E}[\max_{a \in \actions}\mathbb{E}[R_{t+1,a}|\chi_t] - R_{t+1, A_t^{\pi}}]^2}{\I\left(\alpha_{t+2:\infty}; A_t^{\pi}, R_{t+1, A_t^{\pi}} | H_t^{\pi}\right)}$
and $V_T(\alpha) = \sum_{t = 0}^{T-1} \I(\alpha_{t+2:\infty}; R_{t+1} | R_{1:t}).$ 
\label{theorem:general_regret}
\end{theorem}
\begin{proof}
Let $\alpha = \{\alpha_t\}_{t \in \mathbb{N}}$ be a stochastic process satisfying the criterion. Then for all stochastic processes $\chi$,  
policies $\pi$ and $T \in \mathbb{N}$, 
\begin{align*}
 \mathrm{Regret}^{\chi}(T; \pi) 
= &\ \sum_{t= 0 }^{T-1} \mathrm{Regret}^{\chi}_{\mathrm{ins}}(t;\pi) \\
 \overset{}{\leq} &\ \sum_{t=0}^{T-1} 
\sqrt{\Gamma_{t}^{\pi}\left(\chi; \alpha\right) \I\left(\alpha_{t+2:\infty}; A_t^{\pi}, R_{t+1, A_t^{\pi}}|H_t^{\pi}\right)}\\
 \overset{}{\leq} &\ \sqrt{\sum_{t=0}^{T-1} \I\left(\alpha_{t+2:\infty}; A_t^{\pi}, R_{t+1, A_t^{\pi}}|H_t^{\pi}\right)} \sqrt{\sum_{t = 0}^{T-1}{\Gamma}^{\pi}_t\left(\chi; \alpha\right)}, 
\numberthis
\label{eq:main_proof_eq_1}
\end{align*}
where the first inequality follows from the definition of the information ratio, 
and the last inequality follows from the Cauchy-Schwartz inequality.
Observe that 
 $\alpha_{t+2:\infty} \perp H^{\pi}_{t+1} | R_{1:t+1}$. 
Hence, for all stochastic processes $\chi$,  policies $\pi$ and $t \in \mathbb{N}$, 
\begin{align*}
       \I\left(\alpha_{t+2:\infty}; A_t^{\pi}, R_{t+1, A_t^{\pi}} | H_t^{\pi}\right) 
    = \I\left(\alpha_{t+2:\infty}; R_{1:t+1} | H_t^{\pi}\right) 
    - \I\left(\alpha_{t+2:\infty}; R_{1:t+1}|H_{t+1}^{\pi}\right).
\end{align*}
Therefore, for all stochastic processes $\chi$, policies $\pi$ and $T \in \mathbb{N}$, 
\begin{align*}
     &\ \sum_{t = 0}^{T-1} \I\left(\alpha_{t+2:\infty}; A_t^{\pi}, R_{t+1, A_t^{\pi}} | H_t^{\pi}\right)\\
    \stackrel{}{=} &\ \sum_{t = 0}^{T-1}\left[\I\left(\alpha_{t+2:\infty}; R_{1:t+1} | H_t^{\pi}\right) - \I\left(\alpha_{t+2:\infty}; R_{1:t+1}|H_{t+1}^{\pi}\right) \right]\\
     \leq &\ \I(\alpha_{2:\infty}; R_1)  + \sum_{t = 1}^{T-1} \left[\I\left(\alpha_{t+2:\infty}; R_{1:t+1} | H_t^{\pi}\right) - \I\left(\alpha_{t+1:\infty}; R_{1:t}|H_t^{\pi}\right)\right] \\ 
     \stackrel{}{=} &\ \I(\alpha_{2:\infty}; R_1) 
    + 
    \sum_{t = 1}^{T-1} \left[\I\left(\alpha_{t+2:\infty}; R_{1:t} | H_t^{\pi}\right) +
   \I\left(\alpha_{t+2:\infty}; R_{t+1} | R_{1:t}, H_t^{\pi}\right)
    - \I\left(\alpha_{t+1:\infty}; R_{1:t}|H_t^{\pi}\right)\right] \\ 
     \leq &\ \I(\alpha_{2:\infty}; R_1) + 
    \sum_{t = 1}^{T-1} \I\left(\alpha_{t+2:\infty}; R_{t+1} | R_{1:t}, H_t^{\pi}\right) \\
     \stackrel{}{=} &\ \I(\alpha_{2:\infty}; R_1) + 
    \sum_{t = 1}^{T-1} 
    \I\left(\alpha_{t+2:\infty}; R_{t+1} | R_{1:t}\right) \\
    = &\ V_T(\alpha), 
    \numberthis
    \label{eq:main_proof_eq_2}
\end{align*}
where the second equality follows from the chain rule of mutual information, and the second-to-last equality follows from 
$(\alpha_{t+2:\infty}, R_{t+1}) \perp H_t^{\pi}|R_{1:t}$. 
Incorporating \eqref{eq:main_proof_eq_1} and \eqref{eq:main_proof_eq_2}, we complete the proof. 
\end{proof}

In Theorem~\ref{theorem:general_regret}, $V_T$ is the \emph{predictive information} \citep{liu2022nonstationary}, representing information that is useful in predicting future $\alpha_t$'s; $\Gamma_t^{\pi}$ is the \emph{information ratio} \citep{bubeck2015bandit, dong2018information, hao2022contextual, lattimore2019information, lu2023reinforcement, neu2022lifting, RussoMOR2014, russo2016information, russo2018learning}, representing how (in)effcient an agent $\pi$ is at trading-off information acquisition and reward maximization. 

Below we provide a general upper bound on predictive information. 
\begin{proposition} [\bf{General Upper Bound on Predictive Information}]
\label{proposition:markov_info}
Let $\{S_t\}_{t \in \mathbb{N}}$ 
be a Markov process 
such that, for all $t \in \mathbb{N}_0$, 
 $S_{t+1:\infty} \perp R_{1:t} | S_{t}$ and 
 $\alpha_{t+2:\infty} \perp R_{t+1} | S_{t+2}, R_{1:t}$. 
For all $T \in \mathbb{N}$, the cumulative predictive information is defined as $V_T(\alpha) = \sum_{t = 0}^{T-1} \I(\alpha_{t+2:\infty}; R_{t+1} | R_{1:t})$, and 
satisfies 
\begin{align*}
V_T(\alpha)  \leq \I(S_2; S_1) + 
 \sum_{t = 1}^{T-1} \I(S_{t+2}; S_{t+1} | S_{t}).  
\end{align*}
\end{proposition}
\vspace{-3mm}
\begin{proof}
Observe that for all $t \in \mathbb{N}_0$, the amount of incremental predictive information satisfies 
\begin{align*}
\Delta_t(\alpha) = &\ \I(\alpha_{t+2:\infty}; R_{t+1} | R_{1:t})\\
\stackrel{}{\leq} &\ \I(S_{t+2}; R_{t+1} | R_{1:t}) \\
= &\ \H(S_{t+2} | R_{1:t}) - \H(S_{t+2} | R_{1:t+1})\\
= &\ \H(S_{t+2} | R_{1:t}, S_{t+1}) 
+ \I(S_{t+2}; S_{t+1} | R_{1:t}) 
- \H(S_{t+2} | R_{1:t+1}, S_{t+1})
- \I(S_{t+2}; S_{t+1} | R_{1:t+1})\\
\stackrel{}{=} &\ \H(S_{t+2} | S_{t+1}) 
+ \I(S_{t+2}; S_{t+1} | R_{1:t}) 
- \H(S_{t+2} | S_{t+1})
- \I(S_{t+2}; S_{t+1} | R_{1:t+1})\\
= &\ \I(S_{t+2}; S_{t+1} | R_{1:t}) 
- \I(S_{t+2}; S_{t+1} | R_{1:t+1}), 
\end{align*}
where the first inequality follows from $\alpha_{t+2:\infty} \perp R_{t+1} | S_{t+2}, R_{1:t}$ and data-processing inequality, and the second-to-last equality follows from $S_{t+2} \perp R_{1:t+1} | S_{t+1}$. 

Then for all $T \in \mathbb{N}$, the amount of cumulative predictive information can be upper-bounded:
\begin{align*}
V_T(\alpha) = 
\sum_{t = 0}^{T-1} \Delta_t (\alpha) = &\ 
\sum_{t = 0}^{T-1} \left[\I(S_{t+2}; S_{t+1} | R_{1:t}) 
- \I(S_{t+2}; S_{t+1} | R_{1:t+1})\right]\\
= &\ \I(S_2; S_1) + 
\sum_{t = 1}^{T-1} \left[\I(S_{t+2}; S_{t+1} | R_{1:t}) 
- \I(S_{t+1}; S_{t} | R_{1:t})\right] \\
\leq &\ \I(S_2; S_1) + 
\sum_{t = 1}^{T-1} \left[\I(S_{t+2}, S_{t}; S_{t+1} | R_{1:t}) 
- \I(S_{t+1}; S_{t} | R_{1:t})\right] \\
= &\ \I(S_2; S_1) + 
\sum_{t = 1}^{T-1} \left[\I(S_{t+2}; S_{t+1} | R_{1:t}, S_{t}) \right] \\
\stackrel{(a)}{=} &\ \I(S_2; S_1) + 
\sum_{t = 1}^{T-1} \left[\I(S_{t+2}; S_{t+1} | S_{t}) \right], 
\end{align*}
where the last equality follows from $(S_{t+1}, S_{t+2}) \perp R_{1:t} | S_{t}$. 
\end{proof}

The random variable $S_t$ can be thought of as the hidden state of the bandit at timestep $t$. By suitably constructing $\{S_t\}_{t \in \mathbb{N}}$, we can apply this lemma to bound the amount of predictive information in any bandit. In particular, 
this can be achieved by letting $S_t = R_{1:t}$ for all $t \in \mathbb{N}$.  
For some bandits, there is a better choice of $\{S_t\}_{t \in \mathbb{N}}$ with which we can derive sharper bounds or more interpretable bounds when applying the lemma. For example, in a modulated Bernoulli bandit or an AR(1) bandit, we can conveniently let $S_t = \theta_t$, if $\alpha_t = f_t(R_t)$ and $\{f_t\}_{t \in \mathbb{N}}$ is a sequence of deterministic functions. 


To relate our regret bound to existing regret bounds established in the literature of stationary bandits, we compare our bound to a bound established by \cite{neu2022lifting}. 
Observe that in a stationary bandit, if the information ratio satisfies $\Gamma_t^{\pi}(\chi; \alpha) \leq \overline{\Gamma}^{\pi}(\chi; \alpha)$ for all $t \in \mathbb{N}$ 
for some $\overline{\Gamma}^{\pi}(\chi; \alpha)$, 
then our result establishes that 
${\mathrm{Regret}}^{\chi}(T;\pi) \leq \sqrt{\overline{\Gamma}^{\pi}(\chi; \alpha)T\H(\theta)}$, by letting $S_t$ be an invariant latent parameter $\theta$ that determines the reward distribution. 
If we in addition let each $\chi_t$ be the reward distribution and $\alpha_t = R_t$,  
then our regret bound is equivalent to an information-theoretic regret bound established by \cite{neu2022lifting}.

\medskip









\end{document}